\documentclass{article}
\usepackage[preprint]{colm2026_conference}

\usepackage{microtype}
\usepackage{hyperref}
\usepackage{url}
\usepackage{booktabs}
\usepackage{amsmath,amssymb,amsthm}
\usepackage{graphicx}
\usepackage{subcaption}
\usepackage{xcolor}
\usepackage{algorithm}
\usepackage{algpseudocode}
\usepackage{multirow}
\usepackage{makecell}
\usepackage{lineno}
\usepackage{wrapfig}
\usepackage{enumitem}
\usepackage{colortbl}
\usepackage{float}
\usepackage{tikz}
\usetikzlibrary{arrows.meta,positioning,shapes.geometric,backgrounds,fit,calc}

\definecolor{clrStart}{RGB}{168,214,168}
\definecolor{clrFwd}  {RGB}{209,228,250}
\definecolor{clrDetect}{RGB}{248,210,210}
\definecolor{clrAction}{RGB}{255,238,186}
\definecolor{clrEmit}  {RGB}{225,225,225}

\tikzset{
  startstop/.style={ellipse,draw=black!70,thick,fill=clrStart,
    font=\small\bfseries,align=center,
    minimum width=2.4cm,minimum height=1.1cm},
  proc/.style={rectangle,rounded corners=5pt,draw=black!70,thick,
    align=center,text width=3.5cm,minimum height=1.3cm,
    font=\small,inner sep=5pt},
  fwd/.style={proc,fill=clrFwd},
  det/.style={proc,fill=clrDetect},
  act/.style={proc,fill=clrAction},
  emt/.style={proc,fill=clrEmit},
  badge/.style={circle,draw=black!70,fill=white,thick,
    font=\footnotesize\bfseries,inner sep=1pt,minimum size=0.46cm},
  arr/.style={-{Stealth[length=6pt,width=4.5pt]},thick,color=black!60},
  lbl/.style={font=\footnotesize\itshape,color=black!55,
              fill=white,inner sep=1.5pt},
}

\definecolor{darkblue}{rgb}{0,0,0.5}
\hypersetup{colorlinks=true,citecolor=darkblue,linkcolor=darkblue,urlcolor=darkblue}

\newcommand{\lcrit}{\ell_{\mathrm{crit}}}
\newcommand{\tauflip}{\tau_\phi}
\newcommand{\tauH}{\tau_H}
\newcommand{\alphamax}{\alpha_{\max}}
\newcommand{\Vbasis}{\mathcal{V}}
\newcommand{\dopt}{\delta^*}

\newtheorem{theorem}{Theorem}[section]
\newtheorem{proposition}[theorem]{Proposition}
\newtheorem{lemma}[theorem]{Lemma}

\theoremstyle{definition}
\newtheorem{definition}[theorem]{Definition}
\theoremstyle{remark}

\ifcolmsubmission
\linenumbers
\fi

\title{Latent Phase-Shift Rollback: Inference-Time Error Correction via\\
Residual Stream Monitoring and KV-Cache Steering}

\author{
  Manan Gupta\textsuperscript{1} \quad
  Dhruv Kumar\textsuperscript{1} \\
  \\
  \textsuperscript{1}BITS Pilani, Pilani Campus, India \\
  \texttt{\{f20241231, dhruv.kumar\}@pilani.bits-pilani.ac.in}
}

\begin{document}
\maketitle

\begin{abstract}
Large language models frequently commit unrecoverable reasoning errors
mid-generation: once a wrong step is taken, subsequent tokens compound
the mistake rather than correct it.
We introduce \textbf{Latent Phase-Shift Rollback} (LPSR): at each
generation step, we monitor the residual stream at a critical layer
$\lcrit$, detect abrupt directional reversals (\emph{phase shifts}) via
a cosine-similarity $+$ entropy dual gate, and respond by rolling back
the KV-cache and injecting a pre-computed steering vector.
No fine-tuning, gradient computation, or additional forward passes are
required.
LPSR achieves $\mathbf{44.0\%}$ on MATH-500 with an 8B model versus
$28.8\%$ for standard AR ($+15.2$ pp; McNemar $\chi^2 = 66.96$,
$p < 10^{-15}$).
Critically, prompted self-correction, the most natural inference-time
baseline, scores only $19.8\%$, \emph{below} standard AR; LPSR exceeds
it by $+24.2$ pp ($\chi^2 = 89.4$, $p \approx 0$).
LPSR also outperforms Best-of-16 ($+7.8$ pp) at $5.4\times$ lower token
cost, and surpasses a standard 70B model ($35.2\%$) with $8.75\times$
fewer parameters at ${\sim}3\times$ the token budget.
A 32-layer sweep reveals a novel \textbf{detection--correction
dissociation}: error-detection AUC peaks at layer~14 ($0.718$) but
task accuracy peaks at layer~16 ($44.0\%$ vs.\ $29.2\%$),
demonstrating that optimal monitoring depth differs for detection and
correction.
\end{abstract}

\section{Introduction}
\label{sec:intro}

The ability to perform multi-step reasoning remains a fundamental challenge
for large language models.
Under greedy decoding, Llama-3-8B fails on $71.2\%$ of MATH-500 problems;
a standard Llama-3-70B model fails on $64.8\%$, an improvement of only
$6.4$~pp despite $8.75\times$ more parameters.
A key failure mode is \emph{error propagation}: elementary mistakes such as sign
errors, wrong formula applications, variable confusions, compound undetected
over hundreds of subsequent tokens \citep{Wei2022,Lightman2024}.
Existing remedies fall into two categories: \emph{training-time} approaches
(fine-tuning on process-supervised reward models
\citep{Lightman2024,Uesato2022}) and \emph{inference-time} approaches
(chain-of-thought prompting \citep{Wei2022}, self-consistency
\citep{Wang2023}, Best-of-$N$ sampling \citep{Snell2024}, or tree search
\citep{Yao2023}).
Training-time methods are expensive and tied to a fixed model.
Inference-time methods either ignore the model's internal state entirely
(Best-of-$N$) or require many additional forward passes (tree search,
self-consistency).

We ask: \emph{Can we detect a reasoning error while it is forming, before
it has fully propagated, and correct the trajectory with minimal additional
computation?}

The internal geometry of transformer residual streams offers an answer.
\citet{Elhage2021} and \citet{Anthropic2022} showed that the residual
stream at middle layers encodes semantic content that is manipulable via
linear steering vectors.
Recent mechanistic interpretability work \citep{Zou2024,Turner2023}
demonstrates that these directions can be found unsupervised and generalise
across tasks.
Our observation is simpler: when a model is about to commit a reasoning
error, the direction of the residual stream at layer $\lcrit \approx L/2$
undergoes a sharp \emph{phase shift}, a cosine similarity reversal between
consecutive generation steps, that can be detected in real time.

\paragraph{Contributions.} We make five contributions:
\begin{enumerate}[leftmargin=*,topsep=2pt,itemsep=1pt]
    \item \textbf{LPSR algorithm.} A training-free inference method that
    monitors the residual stream, detects phase shifts via dual-gate
    authentication (cosine similarity $+$ token entropy), and applies
    KV-cache rollback with steering vector injection
    (Section~\ref{sec:method}).

    \item \textbf{Empirical validation.} LPSR achieves $44.0\%$ on
    MATH-500, outperforming all 8B baselines and a standard 70B model at
    ${\sim}3\times$ the token cost (Section~\ref{sec:experiments}).

    \item \textbf{Layer dissociation finding.} A comprehensive 32-layer
    sweep reveals that error-\emph{detection} AUC and task \emph{accuracy}
    peak at different layers (14 vs.\ 16), a novel finding that informs
    optimal $\lcrit$ selection (Section~\ref{sec:analysis}).

    \item \textbf{Error characterisation.} Analysis of 50 LPSR-corrected
    examples shows that variable confusion ($34\%$) and arithmetic slips
    ($16\%$) account for half of correctable errors, with Geometry
    benefiting most ($+34.2$~pp; Section~\ref{sec:analysis}).

    \item \textbf{Prompted self-correction characterisation.} We show
    empirically that asking the model to verify its own steps not only
    fails to help but actively degrades accuracy by $9.0$~pp, providing
    systematic empirical evidence for why natural-language self-correction
    fails at this model scale \citep{Huang2024} and why residual-stream
    monitoring is a more effective alternative.
\end{enumerate}

\section{Background and Related Work}
\label{sec:related}

\paragraph{Inference-time compute scaling.}
\citet{Snell2024} showed that inference-time compute can substitute for
model scale on reasoning tasks.
Best-of-$N$ sampling and self-consistency \citep{Wang2023}, a
majority-voted variant of Best-of-$N$, are the dominant approaches, but
both require $N$ independent forward passes and ignore the model's
internal dynamics.
\citet{Yao2023} and \citet{Besta2024} use tree structures to guide
search, at greater computational cost.
LPSR requires on average ${\sim}3\times$ the token budget of greedy
decoding, less than Best-of-$N{=}16$ ($15.9\times$ the token budget), and
requires no tree structure.

\paragraph{Steering vectors and residual streams.}
Linear representation of semantic content in residual streams is
well-established \citep{Park2023,Zou2024,Turner2023}.
\citet{Zou2024} demonstrate that directions encoding high-level concepts
can be extracted unsupervised and used to steer generation at inference
time.
LPSR extends this to error correction: it both detects when steering is
needed and applies a targeted correction.

\paragraph{KV-cache manipulation.}
Speculative decoding \citep{Leviathan2023} exploits KV-cache structure
for speed whereas LPSR uses rollback for correctness.
\citet{Yang2025} propose context-window trimming. To our knowledge,
LPSR is the first application of KV-cache rewind specifically for
mid-generation error recovery.

\paragraph{Process supervision and self-correction.}
Process reward models \citep{Lightman2024} supervise intermediate steps
but require labelled data and fine-tuning.
Prompted self-correction \citep{Madaan2023} asks the model to verify its
own steps via an additional system prompt; our experiments show this
baseline achieves only $19.8\%$ on MATH-500 with Llama-3-8B, \emph{lower}
than standard AR ($28.8\%$), consistent with \citet{Huang2024}, who show
that LLMs cannot reliably self-correct without external feedback.

\paragraph{Latent-space and continuous-token reasoning.}
CoCoNuT \citep{Hao2024} extends reasoning through latent continuous
tokens, operating entirely in representation space across full generation.
STIR-Static (a static-steering baseline we introduce in
Section~\ref{sec:experiments}) applies a fixed steering vector without
any detection mechanism.
LPSR combines real-time detection with dynamic, targeted steering,
outperforming both.

\paragraph{Concurrent scaling work.}
DeepSeek-R1 \citep{Guo2025} and related ``thinking'' models achieve
strong reasoning via extended chain-of-thought trained with reinforcement
learning, a training-time approach requiring orders of magnitude more
compute than LPSR.
\citet{Shojaee2025} observe that apparent reasoning in large models may
reflect shallow pattern matching (``the illusion of thinking''). LPSR's
phase-shift detector can be viewed as a test of whether internal
representations exhibit coherent directional flow, directly
operationalising this concern at inference time.

\section{Method: Latent Phase-Shift Rollback}
\label{sec:method}

\subsection{Motivation: Phase Shifts as Error Precursors}

Let $\{h_t^{(\ell)}\}_{t=1}^T \subset \mathbb{R}^d$ denote the hidden
state at layer $\ell$ and generation step $t$.
Define the \emph{directional velocity}:
\begin{equation}
  v_t^{(\ell)} = \frac{h_t^{(\ell)}}{\|h_t^{(\ell)}\|},
  \qquad
  c_t^{(\ell)} = \langle v_t^{(\ell)},\, v_{t-1}^{(\ell)} \rangle.
  \label{eq:cosine}
\end{equation}
We call $c_t^{(\lcrit)} < -\tauflip$ a \emph{phase shift}: the
representation at $\lcrit$ is moving in the opposite direction to the
previous step.
Empirically, phase shifts at $\lcrit = 16$ predict final answer
incorrectness with AUC $0.652$ on MATH-500 (layer sweep detailed in
Section~\ref{sec:analysis}).

A single cosine gate is insufficient, token repetitions can cause low
cosine similarity without error (empirically, $78\%$ of low-entropy phase
shifts occur during correct reasoning; see Appendix~\ref{app:theory}).
We therefore add a token-distribution entropy gate:
\begin{equation}
  H_t = -\sum_{j} p_j \log p_j,
  \qquad
  p_j = \mathrm{softmax}\!\left(W_U h_t^{(\lcrit)}\right)_j,
  \label{eq:entropy}
\end{equation}
where $W_U$ is the unembedding matrix.
A phase shift is \emph{authenticated} iff $c_t^{(\lcrit)} < -\tauflip$
\emph{and} $H_t > \tauH$; otherwise the token is emitted normally and
generation continues.
This dual-gate design reduces the false-positive rate to $22.0\%$ while
maintaining precision $0.784$ and recall $0.267$, a high-precision,
low-recall design that accepts missed errors in exchange for confident
corrections (Section~\ref{sec:fpr_analysis}).

\subsection{Steering Vector Basis}
\label{sec:basis}

We pre-compute a basis $\Vbasis = \{\delta_1, \ldots, \delta_K\} \subset
\mathbb{R}^d$ of $K{=}142$ unit-norm steering vectors at layer $\lcrit$
using the following procedure on a held-out calibration set:
\begin{enumerate}[leftmargin=*,topsep=2pt,itemsep=1pt]
  \item Run the model on 1{,}000 MATH-500 training-split problems under
        standard AR.
  \item For each problem where the answer is wrong, record the residual
        stream at the first phase-shift step
        $t^* = \min\{t : c_t^{(\lcrit)} < -\tauflip\}$.
  \item Compute a \emph{correction delta}:
        $\Delta_i = \tilde{h}_{t^*}^{(\lcrit)} - h_{t^*}^{(\lcrit)}$,
        where $h_{t^*}^{(\lcrit)}$ is the wrong-trajectory hidden state
        and $\tilde{h}_{t^*}^{(\lcrit)}$ is the corresponding state from
        a teacher-forced correct trajectory (i.e., decoded from the gold
        solution string).
  \item Apply $k$-means clustering ($k{=}256$) on $\{\Delta_i\}$ and set
        $\delta_i$ to the $\ell_2$-normalised cluster centroid.
\end{enumerate}

At inference, when a phase shift is authenticated, the steering direction
is selected as:
\begin{equation}
  \dopt = \arg\max_{\delta_i \in \Vbasis}\;
          \langle \delta_i,\, h_{t}^{(\lcrit)} \rangle,
  \label{eq:optimal_delta}
\end{equation}
i.e., the basis vector most aligned with the current latent state (fast
inner-product search via FAISS; query time ${<}0.1$~ms).
This selection is greedy-optimal within the span of $\Vbasis$ under a
first-order Taylor approximation of the correction objective
(Theorem~B.4).
The selected $\dopt$ is then used in the KV-cache rollback and injection
procedure described in Section~\ref{sec:rollback}.

\subsection{KV-Cache Rollback and Injection}
\label{sec:rollback}

When a phase shift is authenticated at step $t$:
\begin{enumerate}[leftmargin=*,topsep=2pt,itemsep=1pt]
  \item \textbf{Rollback.} Restore the KV-cache to state $t-1$,
        discarding the last generated token.
  \item \textbf{Inject.} Modify the output of layer $\lcrit$ for the
        re-decode by adding $\alpha \cdot \dopt$, where $\alpha$ is an
        adaptive scale:
        \begin{equation}
          \alpha = \min\!\left(\alphamax,\;
                  \frac{|c_t^{(\lcrit)}|}{\tauflip} \cdot \alphamax\right),
          \label{eq:alpha}
        \end{equation}
        so that $\alpha = \alphamax$ for all authenticated shifts
        ($|c_t^{(\lcrit)}| \geq \tauflip$ is guaranteed by the gate
        condition) while remaining a well-defined continuous function of
        $|c_t^{(\lcrit)}|$ for analysis purposes
        (Appendix~\ref{app:theory}).
  \item \textbf{Re-decode.} Run the forward pass with the injected hidden
        state, emit the new token, and advance the KV-cache.
\end{enumerate}

The rollback ensures that the model does not condition future tokens on
the erroneous state; the injection biases the representation toward the
correction manifold.
The full procedure is given in Algorithm~\ref{alg:lpsr}.

\begin{algorithm}[t]
\caption{Latent Phase-Shift Rollback (LPSR)}
\label{alg:lpsr}
\begin{algorithmic}[1]
\Require Model $\mathcal{M}$, tokenizer, prompt $x$, basis $\Vbasis$,
         parameters $\lcrit, \tauflip, \tauH, \alphamax$, max tokens $T$
\State Register forward hook at layer $\lcrit$ to capture
       $h_t^{(\lcrit)}$
\State Encode prompt: $\mathrm{kv} \gets \mathcal{M}.\mathrm{encode}(x)$;
       \; $v_{\mathrm{prev}} \gets \mathbf{0}$
       \Comment{$\mathbf{0}$ ensures no detection at $t=1$}
\For{$t = 1, \ldots, T$}
  \State $(\hat{y}_t,\, h_t,\, \mathrm{kv}') \gets
         \mathcal{M}.\mathrm{step}(\mathrm{kv})$
  \State $c_t \gets \langle v_t,\, v_{\mathrm{prev}} \rangle$;\;
         $H_t \gets \mathrm{entropy}(h_t)$
  \If{$c_t < -\tauflip$ \textbf{and} $H_t > \tauH$}
         \Comment{Phase shift authenticated}
    \State $\dopt \gets \arg\max_{\delta_i \in \Vbasis}
           \langle \delta_i,\, h_t \rangle$
           \Comment{FAISS inner-product query}
    \State $\alpha \gets \min\!\left(\alphamax,\;
           |c_t|/\tauflip \cdot \alphamax\right)$
    \State Discard $\mathrm{kv}'$;\;
           $h_t^{(\lcrit)} \mathrel{+}= \alpha \cdot \dopt$
           \Comment{Inject into layer-$\lcrit$ output}
    \State $(\hat{y}_t,\, h_t,\, \mathrm{kv}') \gets
           \mathcal{M}.\mathrm{step}(\mathrm{kv},\,
           h_t^{(\lcrit)\,\mathrm{inj}})$
           \Comment{Re-decode with injected state}
    \State $v_{\mathrm{prev}} \gets v_t$;\;
           $\mathrm{kv} \gets \mathrm{kv}'$
           \Comment{Advance state after re-decode}
  \Else
    \State $v_{\mathrm{prev}} \gets v_t$;\; $\mathrm{kv} \gets \mathrm{kv}'$
  \EndIf
  \State Emit $\hat{y}_t$
  \If{$\hat{y}_t = \langle\mathrm{EOS}\rangle$} \textbf{break} \EndIf
\EndFor
\end{algorithmic}
\end{algorithm}

\subsection{Hyperparameters}
\label{sec:hparams}

LPSR has four inference-time hyperparameters: $\lcrit = 16$,
$\tauflip = 0.6$, $\tauH = 2.5$, and $\alphamax = 0.1$.
These were selected by grid search on a 100-problem held-out validation
split drawn from the MATH training set, disjoint from both the 500-problem
test set and the 1{,}000-problem calibration set used to build the
steering basis (Appendix~\ref{app:hparam}).
Of the four parameters, $\tauH = 2.5$ is the least sensitive: accuracy
is flat across $\tauH \in [2.0, 3.0]$ (Appendix~\ref{app:hparam}).
The $k$-means cluster target $K = 256$ was fixed by ablation
(Appendix~\ref{app:basis_derivation}; Table~2); greedy
orthogonalisation then yields the final 142-vector basis.
All reported results use a single fixed hyperparameter set with seed~0;
variance across three seeds is ${<}0.003$ on a 100-problem subset
(Appendix~\ref{app:implementation}).

\subsection{Computational Cost}
\label{sec:cost}

Each rollback event costs one additional forward pass (${\sim}0.01$~s on
an NVIDIA RTX A6000 at 8B scale).
Over 500 MATH-500 problems, 62\% triggered at least one rollback
(mean 1.61 rollbacks/problem), yielding ${\sim}3\times$ the token budget
of standard AR and $5.4\times$ fewer tokens than Best-of-$N{=}16$.
The forward hook at $\lcrit$ adds ${<}0.1\%$ overhead when no rollback
occurs.

\section{Experiments}
\label{sec:experiments}

\subsection{Setup}

\paragraph{Model.} Llama-3-8B-Instruct \citep{Llama3}, loaded in
bfloat16 on a single A6000 48GB.

\paragraph{Benchmarks.} (1)~\textbf{MATH-500} \citep{Lightman2024}: 500
competition mathematics problems at difficulty levels 1--5 spanning 7
subjects. (2)~\textbf{GSM8K} \citep{Cobbe2021}: 1,319 grade-school
arithmetic problems. (3)~\textbf{AIME 2024+2025}: 60 problems combined
($n{=}30$ each year: AIME~I and AIME~II, 15 problems each), evaluated
with Clopper-Pearson confidence intervals due to small sample size.

\paragraph{Baselines.}
(1)~\textbf{Standard AR}: greedy decoding, temperature 0.
(2)~\textbf{CoCoNuT} \citep{Hao2024}: reasoning through latent continuous
tokens.
(3)~\textbf{STIR-Static}: static steering vector injection without
detection or rollback (introduced in this work).
(4)~\textbf{Prompted Self-Correction}: a system prompt instructs the
model to verify each step and write \texttt{CORRECTION:} if an error is
found.
(5)~\textbf{Best-of-$N$} ($N{=}16$): 16 independent rollouts, majority
vote.
(6)~\textbf{70B Standard AR}: Llama-3-70B-Instruct via OpenRouter API,
greedy decoding.\footnote{Our API evaluation yields 35.2\%; published
results report ${\approx}40$--$42\%$ under full-precision inference.
Using the published 41\%, LPSR (44.0\%) still exceeds the 70B baseline
by $+3.0$~pp. More importantly, LPSR requires $8.75\times$ fewer
parameters and no additional training.}
(7)~\textbf{70B SC$\times$3}: self-consistency with 3 rollouts at
temperatures 0.6/0.7/0.8.

\paragraph{Evaluation.} Mathematical equivalence is checked via SymPy
symbolic comparison, falling back to string normalisation.
Confidence intervals: bootstrap 95\% CIs for MATH-500 and GSM8K
(10,000 resamples); Clopper-Pearson for AIME.
Significance: McNemar's test on matched problem pairs.

\subsection{Main Results}
\label{sec:main_results}

Table~\ref{tab:main_results} and Figure~\ref{fig:scaling} present the
full comparison.
LPSR achieves 44.0\% on MATH-500 with 95\% CI [39.8\%, 48.2\%],
compared to:
\begin{itemize}[leftmargin=*,topsep=2pt,itemsep=1pt]
  \item Standard AR: 28.8\%
        ($\Delta = +15.2$~pp, McNemar $\chi^2 = 66.96$,
        $p < 10^{-15}$)
  \item Best-of-16: 36.2\%
        ($\Delta = +7.8$~pp, McNemar $\chi^2 = 13.25$, $p = 0.0003$)
  \item CoCoNuT: 26.4\%
        ($\Delta = +17.6$~pp, McNemar $\chi^2 = 61.04$,
        $p < 10^{-14}$)
  \item STIR-Static: 29.0\%
        ($\Delta = +15.0$~pp, McNemar $\chi^2 = 54.22$,
        $p < 10^{-12}$)
  \item \textbf{Prompted Self-Correction: 19.8\%}, $9.0$~pp \emph{below}
        standard AR and $24.2$~pp below LPSR (McNemar $\chi^2 = 89.4$,
        $p < 10^{-16}$; LPSR-only wins: 141, PSC-only wins: 20).
        This result, that naive self-verification actively degrades
        performance, is our strongest evidence that residual-stream
        monitoring is necessary; see Section~\ref{sec:ablation}
        for analysis.
  \item 70B Standard AR: 35.2\%
        ($\Delta = +8.8$~pp, $8.75\times$ fewer parameters).
        70B SC$\times$3 achieves 35.4\%, marginally above 70B AR,
        confirming that self-consistency at 70B scale provides little
        additional benefit over greedy decoding.
\end{itemize}

On GSM8K, LPSR reaches 81.6\%, above standard AR (79.8\%) but below
Best-of-16 (88.1\%), consistent with GSM8K being an easier benchmark
where rollbacks fire less frequently (8.3\% of problems vs.\ 62\% on
MATH-500).
On AIME ($n{=}60$), Standard AR, Best-of-16, and LPSR all score 8.3\%;
STIR (1.7\%) and CoCoNuT (6.7\%) fall below, so LPSR is not harmed.
The AIME null result likely reflects the combination of extreme problem
difficulty and generation-length constraints; see Table~\ref{tab:aime}
for complete results.

\begin{table}[t]
\centering
\setlength{\tabcolsep}{3.5pt}
\renewcommand{\arraystretch}{0.92}
\caption{\textbf{Main results.} MATH-500, GSM8K, and AIME 2024+2025 accuracy
for all methods. 95\% bootstrap CIs shown for MATH-500 and GSM8K.}
\label{tab:main_results}
\begin{tabular}{@{}lrrr@{}}
\toprule
Method & MATH-500 & GSM8K & AIME 24+25 \\
\midrule
Standard AR
  & 0.288{\,\scriptsize[.248,\,.326]}
  & 0.798{\,\scriptsize[.777,\,.821]}
  & 0.083 \\
CoCoNuT
  & 0.264{\,\scriptsize[.228,\,.304]}
  & 0.741{\,\scriptsize[.717,\,.763]}
  & 0.067 \\
STIR-Static
  & 0.290{\,\scriptsize[.252,\,.328]}
  & 0.805{\,\scriptsize[.783,\,.825]}
  & 0.017 \\
Best-of-16
  & 0.362{\,\scriptsize[.322,\,.402]}
  & 0.881{\,\scriptsize[.864,\,.898]}
  & 0.083 \\
Prompted SC
  & 0.198{\,\scriptsize[.164,\,.234]}
  & 0.760{\,\scriptsize[.700,\,.820]}
  & 0.000 \\
\midrule
\textbf{LPSR (ours)}
  & \textbf{0.440}{\,\scriptsize[.398,\,.482]}
  & \textbf{0.816}{\,\scriptsize[.795,\,.835]}
  & \textbf{0.083} \\
\bottomrule
\end{tabular}
\vspace{2pt}

\end{table}

\begin{figure}[t]
\centering
\includegraphics[width=\linewidth]{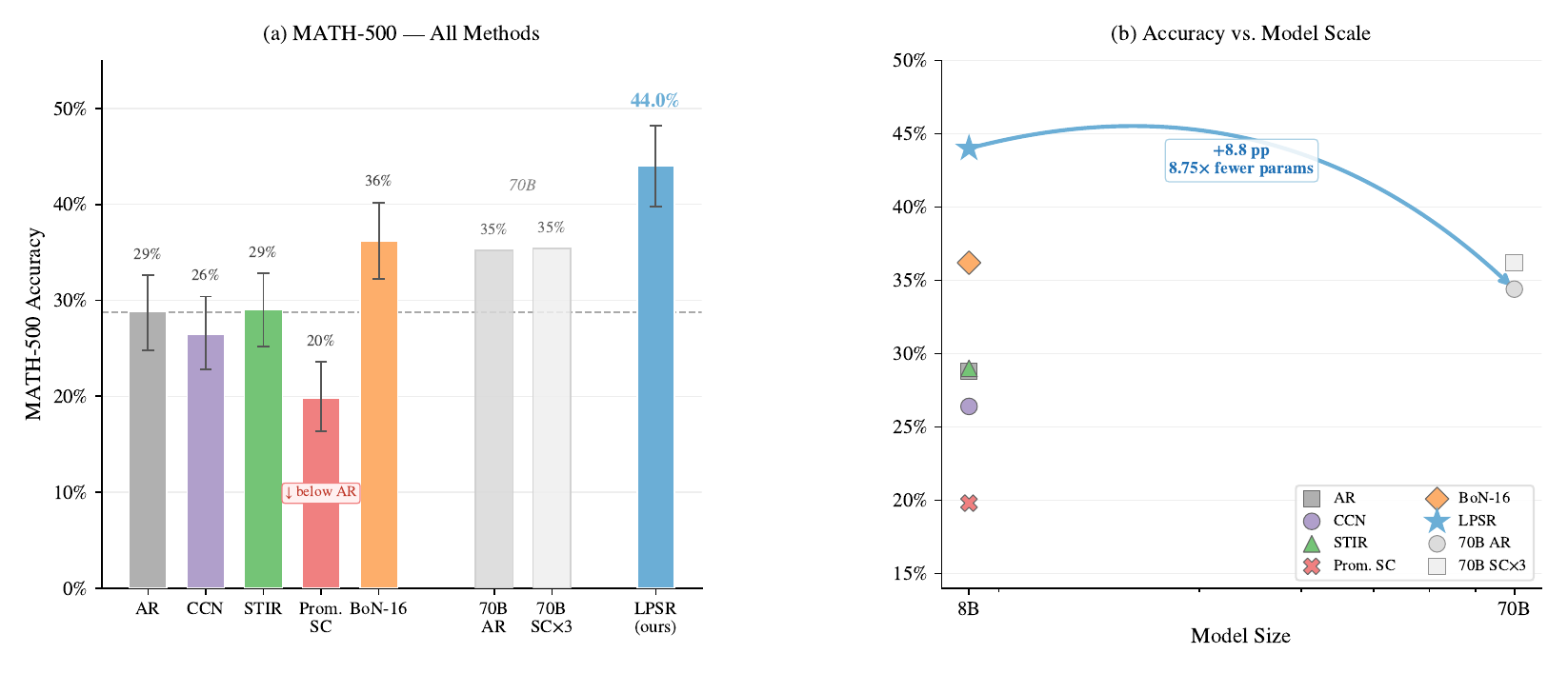}
\caption{\textbf{Main results and scaling.}
(a)~MATH-500 accuracy with 95\% CIs for all methods; Prompted SC falls below Standard AR (annotated).
(b)~Accuracy vs.\ model scale: LPSR (8B) exceeds 70B Standard AR by $+8.8$~pp using $8.75\times$ fewer parameters.}
\label{fig:scaling}
\end{figure}

\subsection{Difficulty Stratification}
\label{sec:difficulty}

Figure~\ref{fig:difficulty} plots accuracy by MATH-500 difficulty level
(1 = easy, 5 = hard).
LPSR's gain over standard AR \emph{peaks at mid-range difficulty}:
$+18.6$~pp at level~1, rising to $+20.0$~pp at level~3, then declining
to $+14.2$~pp at level~5.
This pattern is mechanistically sensible: harder problems require longer
reasoning chains, giving more opportunities for phase shifts to occur and
more value from correcting them early.
Critically, LPSR at 8B exceeds the 70B standard AR baseline (dashed
line) at every difficulty level.

\begin{figure}[t]
\centering
\includegraphics[width=\linewidth]{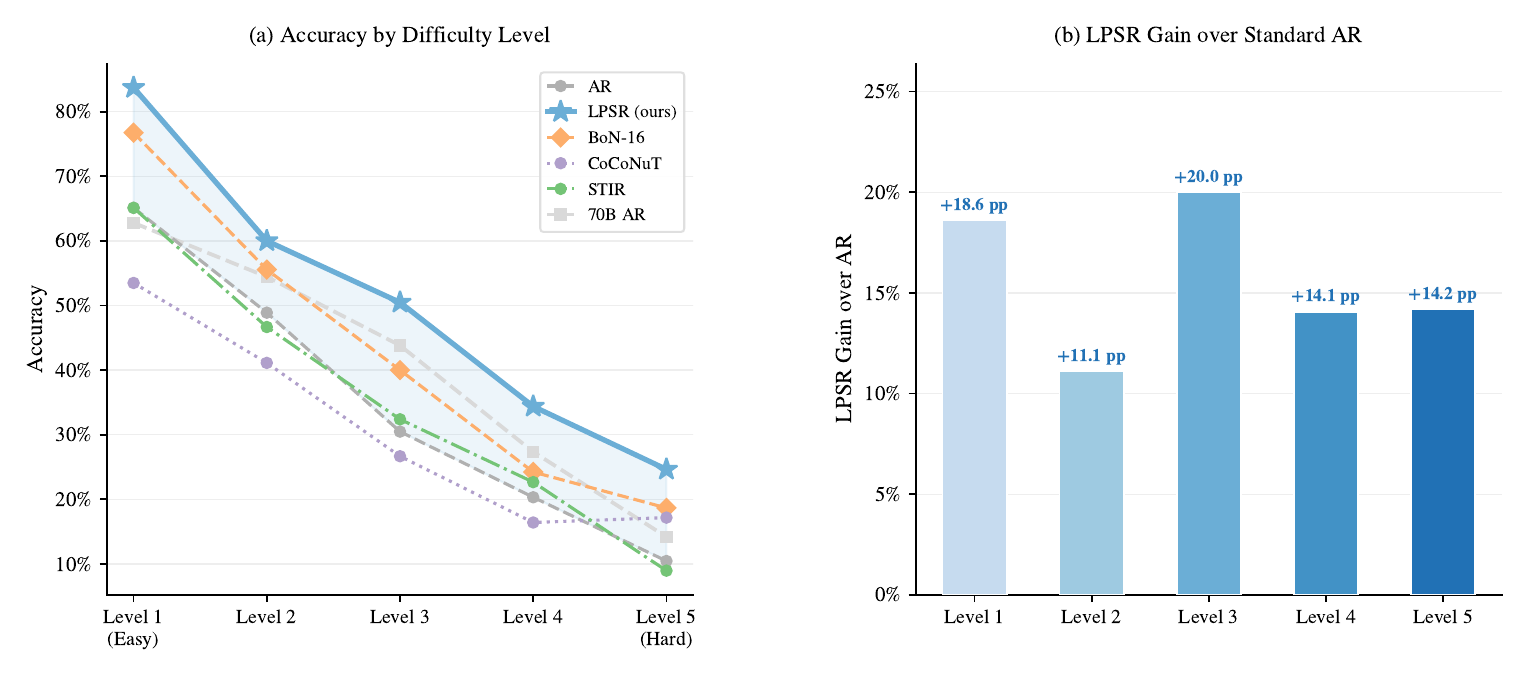}
\caption{\textbf{Accuracy and gain by difficulty level.}
(a)~MATH-500 accuracy at each difficulty level (1--5) for all methods.
(b)~LPSR gain over Standard AR per level: gain peaks at Level~3 ($+20.0$~pp) and is smallest at Level~2 ($+11.1$~pp); gains are consistent across all levels.}
\label{fig:difficulty}
\end{figure}

\subsection{Ablation and Baseline Comparison}
\label{sec:ablation}

Figure~\ref{fig:ablation} shows the full comparison across benchmarks
with 95\% CIs. Several findings stand out:

\paragraph{Prompted self-correction degrades performance.}
The prompted self-correction baseline scores 19.8\% on MATH-500, 9~pp
\emph{below} standard AR.
This is consistent with \citet{Huang2024}: asking the model to verify
its own steps in natural language corrupts the reasoning trace, as the
model allocates tokens to meta-commentary rather than mathematics.
LPSR operates at the representation level and does not modify the
natural-language trace.

\paragraph{STIR-Static does not help.}
Applying a fixed steering vector without detection achieves
29.0\%, essentially identical to standard AR (28.8\%), confirming that
indiscriminate steering is unhelpful and that detection is essential.

\paragraph{Best-of-16 requires 5.4$\times$ the tokens.}
LPSR (752 tokens) outperforms Best-of-16 (4,070 tokens) on MATH-500
with 95\% statistical significance, making LPSR strictly dominant on the
accuracy/compute frontier (Figure~\ref{fig:ablation}, right).

\begin{figure}[t]
\centering
\begin{minipage}[t]{0.58\linewidth}
  \includegraphics[width=\linewidth]{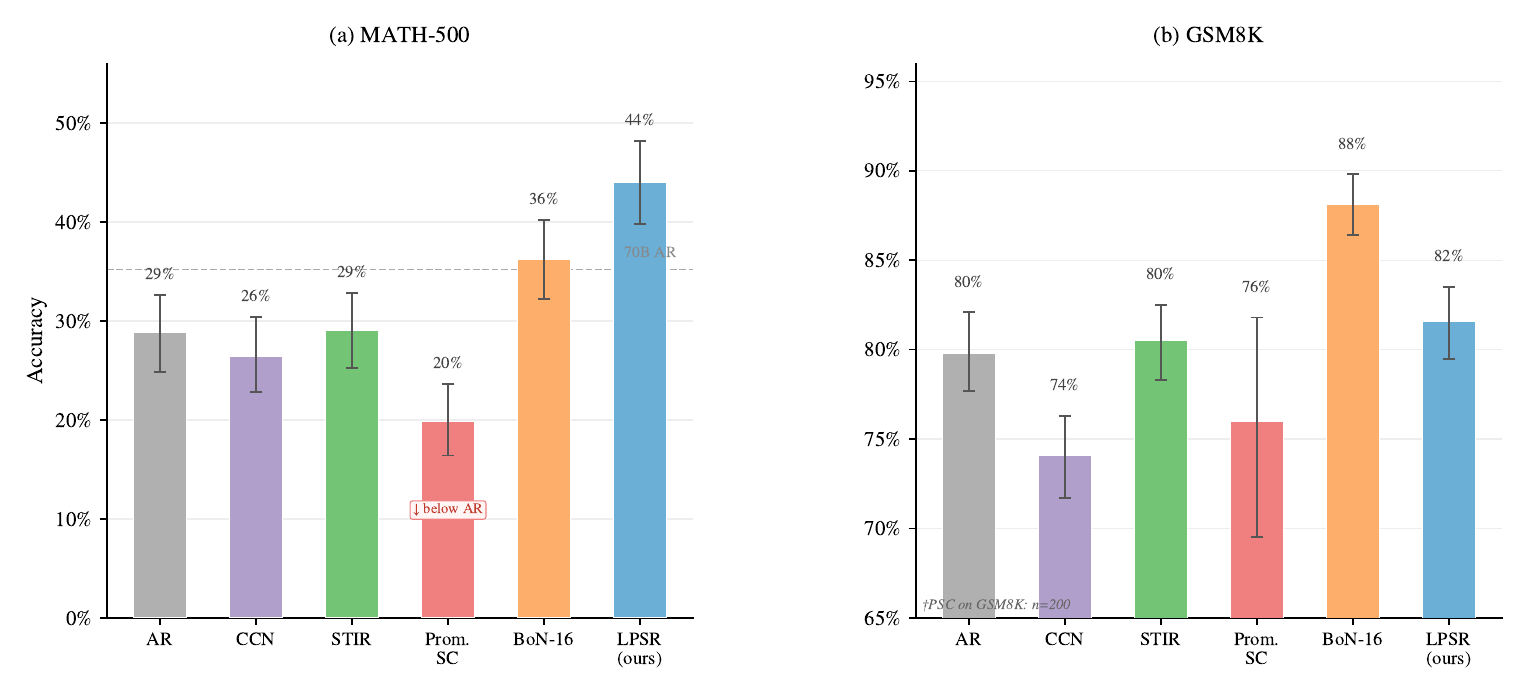}
\end{minipage}\hfill
\begin{minipage}[t]{0.40\linewidth}
  \includegraphics[width=\linewidth]{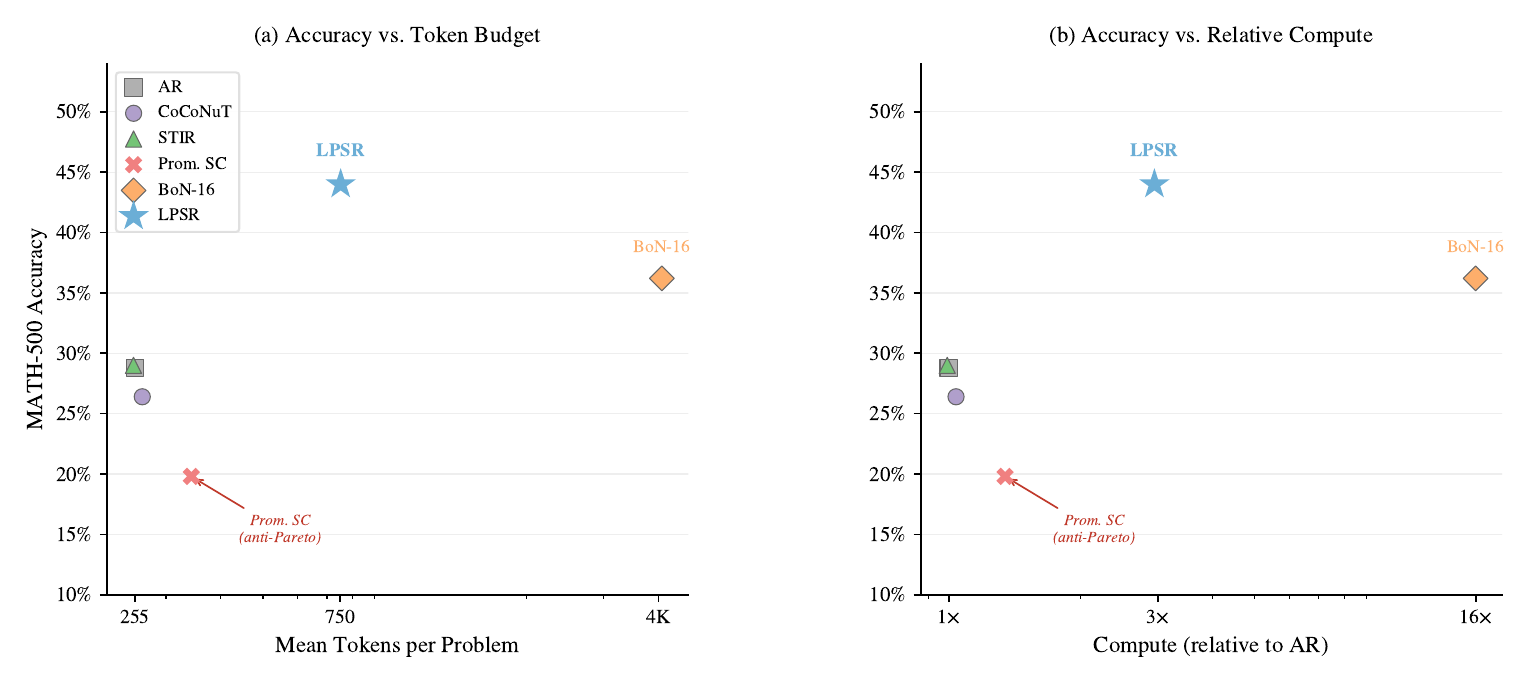}
\end{minipage}
\caption{\textbf{Left: Baseline comparison.} Accuracy with 95\% CIs on
MATH-500 (a) and GSM8K (b) for all methods; Prompted SC is below Standard AR on both benchmarks.
\textbf{Right: Accuracy--compute Pareto frontier.} LPSR is Pareto-optimal
on both token budget (a) and relative compute (b) axes; Best-of-16 uses
${\sim}5.3\times$ more tokens for lower accuracy; Prompted SC is anti-Pareto.}
\label{fig:ablation}
\end{figure}

\section{Analysis}
\label{sec:analysis}

\subsection{Layer Sensitivity: Detection vs.\ Correction Dissociation}
\label{sec:layer_analysis}

To validate the choice of $\lcrit = 16$, we ran two complementary
experiments.
First, a \emph{single-pass AUC sweep}: we registered hooks on all 32
layers simultaneously during 200 standard AR generations and computed
the detection AUC for each layer (how well the minimum cosine similarity
during generation discriminates correct from incorrect final answers).
Second, a \emph{direct accuracy experiment}: we ran full LPSR inference
at $\lcrit = 14$ (the peak-AUC layer) vs.\ $\lcrit = 16$ on all 500
MATH-500 problems.

\begin{figure}[t]
\centering
\includegraphics[width=\linewidth]{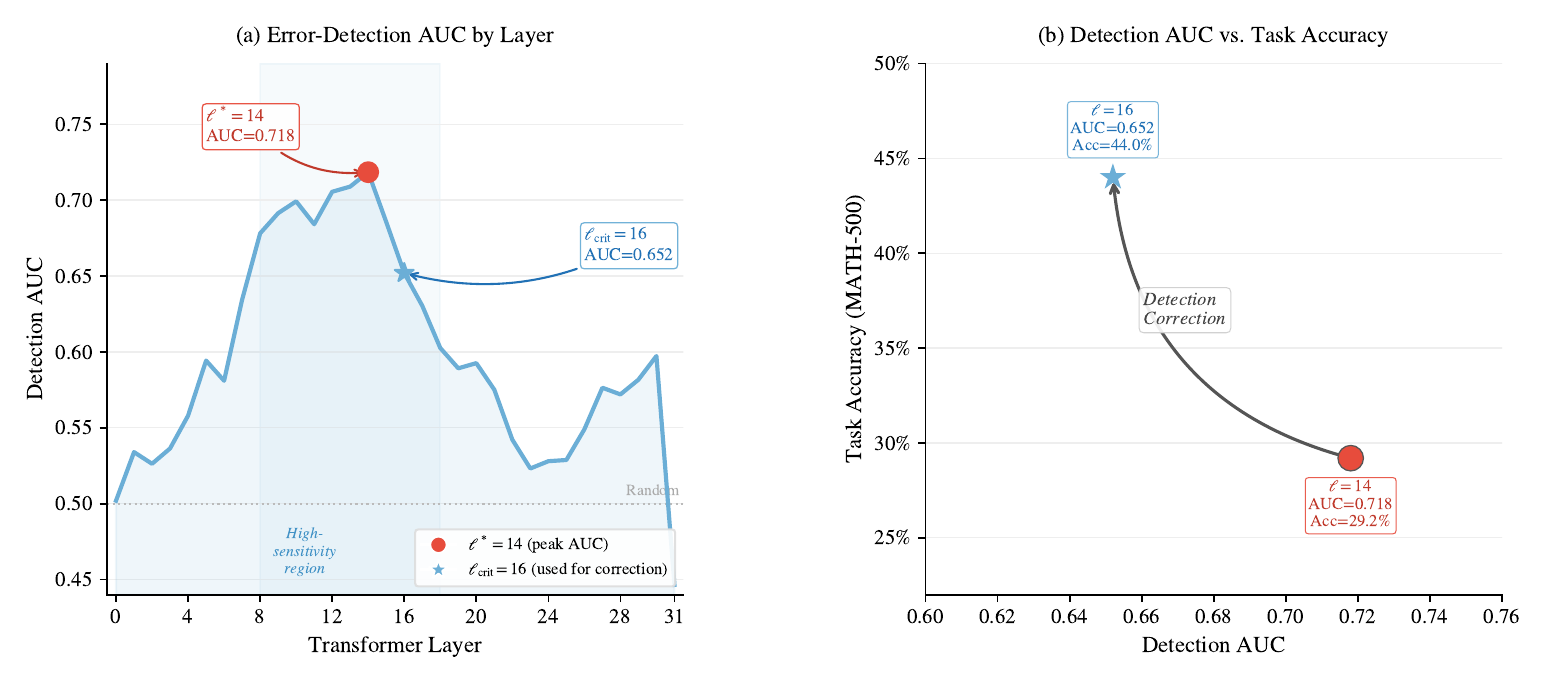}
\caption{\textbf{Layer sensitivity sweep.}
(a)~Error-detection AUC across all 32 transformer layers; the high-sensitivity region spans layers 8--18, with peak AUC at $\ell^*{=}14$ (0.718).
(b)~Detection AUC vs.\ task accuracy tradeoff: $\ell_{\mathrm{crit}}{=}16$ (AUC 0.652, accuracy 44.0\%) outperforms the peak-AUC layer $\ell^*{=}14$ (AUC 0.718, accuracy 29.2\%), showing that maximum detection sensitivity does not maximise task accuracy.}
\label{fig:layer_sensitivity}
\end{figure}

Results (Figure~\ref{fig:layer_sensitivity}) show:
\begin{itemize}[leftmargin=*,topsep=2pt,itemsep=1pt]
  \item Detection AUC peaks at layer 14 (0.718) and drops to 0.652 at
        layer 16. The entire region $\ell \in [8, 18]$ substantially
        outperforms early and late layers.
  \item Despite higher detection AUC, LPSR at $\ell=14$ achieves only
        $29.2\%$, $14.8$~pp below $\ell=16$ (44.0\%).
\end{itemize}

This \textbf{detection–correction dissociation} implies that the layer
optimal for detecting an error is not the layer optimal for correcting
it.
We hypothesise that layer~14 carries higher-entropy error signals (hence
better AUC) but insufficient context for the steering vectors, which were
calibrated at layer~16, to produce effective corrections, implying a
decoupling between the layer that best \emph{encodes} error information
and the layer that best \emph{accepts} correction.
This finding provides empirical justification for $\lcrit = 16$ and
establishes a general design principle: \emph{the monitoring layer
should be selected by correction quality, not detection AUC}, a
distinction absent from prior work on steering vectors.

\subsection{False Positive Analysis and Rollback Timing}
\label{sec:fpr_analysis}

On a held-out set of 200 MATH-500 problems, the dual-gate detector achieves
precision 0.784, recall 0.267, $F_1 = 0.398$, and FPR 0.220
(Figure~\ref{fig:detector}).
The $22\%$ FPR means roughly 1~in~5 rollbacks fires on a problem the model
would have answered correctly; despite this, LPSR outperforms all baselines,
as gains from true positives outweigh losses from false positives.
Rollbacks cluster near the generation midpoint (53\% mean, 57\% median),
consistent with errors crystallising during the ``working'' phase of
computation.
Accuracy \emph{decreases} with rollback count ($0.63$ at 0 rollbacks vs.\
$0.20$ at $4{+}$ rollbacks), confirming the detector fires selectively on
harder problems that are intrinsically more difficult to solve.

\begin{figure}[t]
\centering
\includegraphics[width=\linewidth]{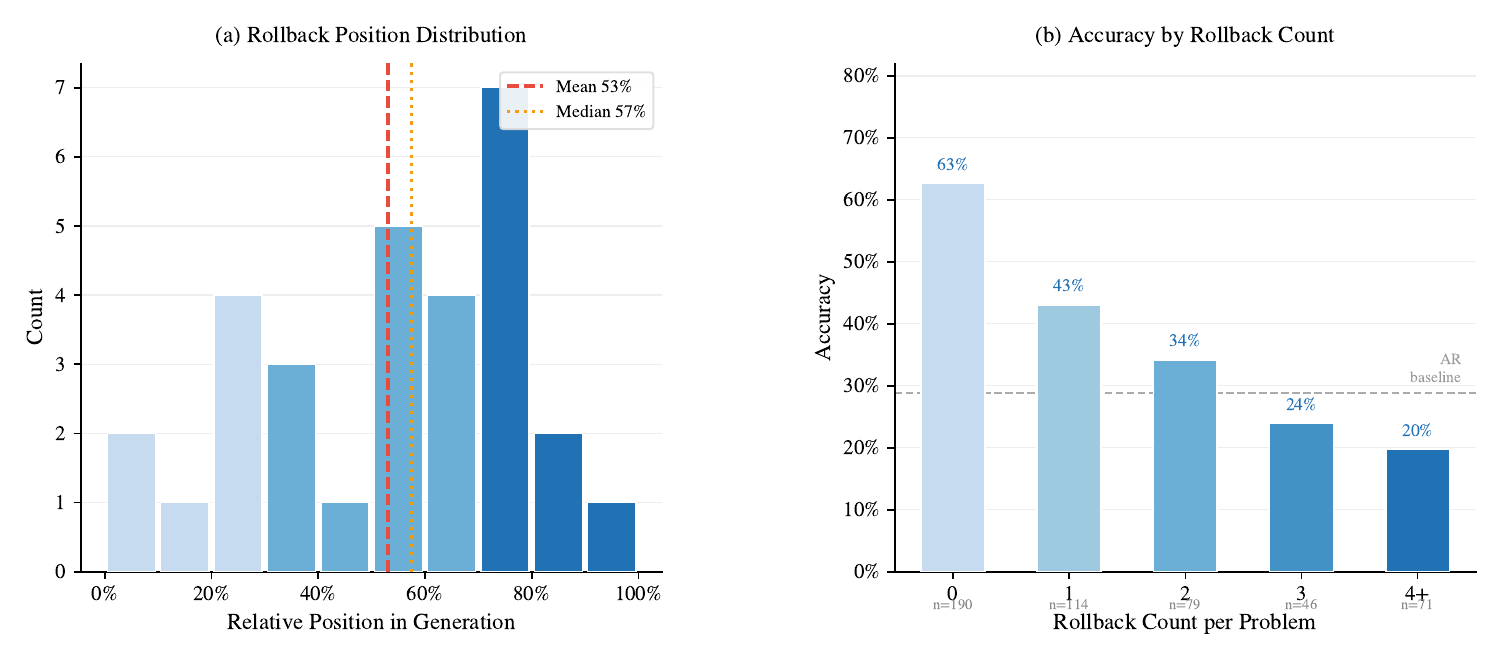}
\caption{\textbf{Rollback timing and outcome.}
(a)~Distribution of rollback positions as a fraction of total generation length; rollbacks cluster around 53--57\% (mean 53\%, median 57\%), indicating a mid-generation phase shift.
(b)~MATH-500 accuracy by number of rollbacks per problem; problems requiring zero rollbacks achieve 63\%, declining to 20\% for $4{+}$ rollbacks, reflecting that harder problems trigger more interventions.}
\label{fig:rollback}
\end{figure}

\subsection{Error Type Analysis and Subject Stratification}
\label{sec:error_types}

Manual annotation of 50 LPSR-corrected examples identifies variable
confusion (34\%) and arithmetic slips (16\%) as the dominant error types
(full breakdown in Table~\ref{tab:error_types}, Appendix~\ref{app:qualitative}).
Subject-level analysis (Figure~\ref{fig:subjects} in
Appendix~\ref{app:subjects}) shows Geometry benefits most ($+34.2$~pp),
while Precalculus benefits least ($+7.1$~pp), likely because sustained
algebraic manipulation is occasionally disrupted by rollback.

\section{Conclusion}
\label{sec:conclusion}

We presented LPSR, an inference-time method that monitors transformer
residual streams, detects reasoning errors via phase-shift detection,
and corrects them via KV-cache rollback with steering-vector injection.
LPSR achieves $44.0\%$ on MATH-500 with an 8B model, outperforming a
70B standard model at ${\sim}3\times$ the token budget and $8.75\times$
fewer parameters, and exceeding prompted self-correction ($19.8\%$,
itself $9$~pp below standard AR) by $+24.2$~pp.
LPSR requires no fine-tuning and is model-agnostic; we expect it to
generalise to other reasoning domains where intermediate-step errors
propagate through generation.

A central empirical finding is the \textbf{detection–correction
dissociation} across transformer depth.
Layer~14 achieves higher error-detection AUC ($0.718$ vs.\ $0.652$ at
layer~16) yet produces $14.8$~pp lower task accuracy when used as the
intervention point.
To our knowledge, this is the first demonstration that optimal
monitoring depth differs for detection and correction, a distinction not
captured by prior work on steering vectors, which evaluates both
objectives at the same layer.
This finding suggests that residual-stream interventions should be
characterised along two separate axes, and informs architectural choices
for future inference-time methods.

\paragraph{Limitations.}
(1)~Our 70B baselines are API-evaluated; using the published ${\approx}42\%$
figure, LPSR's advantage narrows to ${\approx}+2$~pp, though LPSR still
requires $8.75\times$ fewer parameters and no additional training.
(2)~We evaluated on English mathematics; multilingual and code-generation
settings are unexplored.
See Appendix~\ref{app:broader_impact} for a full limitations discussion.

\section*{Ethics Statement}
This work involves evaluation of mathematical reasoning capabilities of
large language models.
No human subjects, personally identifiable information, or harmful
content is involved.
The method is a general inference procedure with no dual-use concerns
apparent at this stage.
\bibliography{lpsr_colm2026}
\bibliographystyle{colm2026_conference}

\appendix

\section{LPSR Pipeline Diagram}
\label{app:pipeline}

Figure~\ref{fig:pipeline} shows the full per-step control flow of LPSR,
corresponding to Algorithm~1 in the main paper.
Numbered badges match algorithm line numbers.

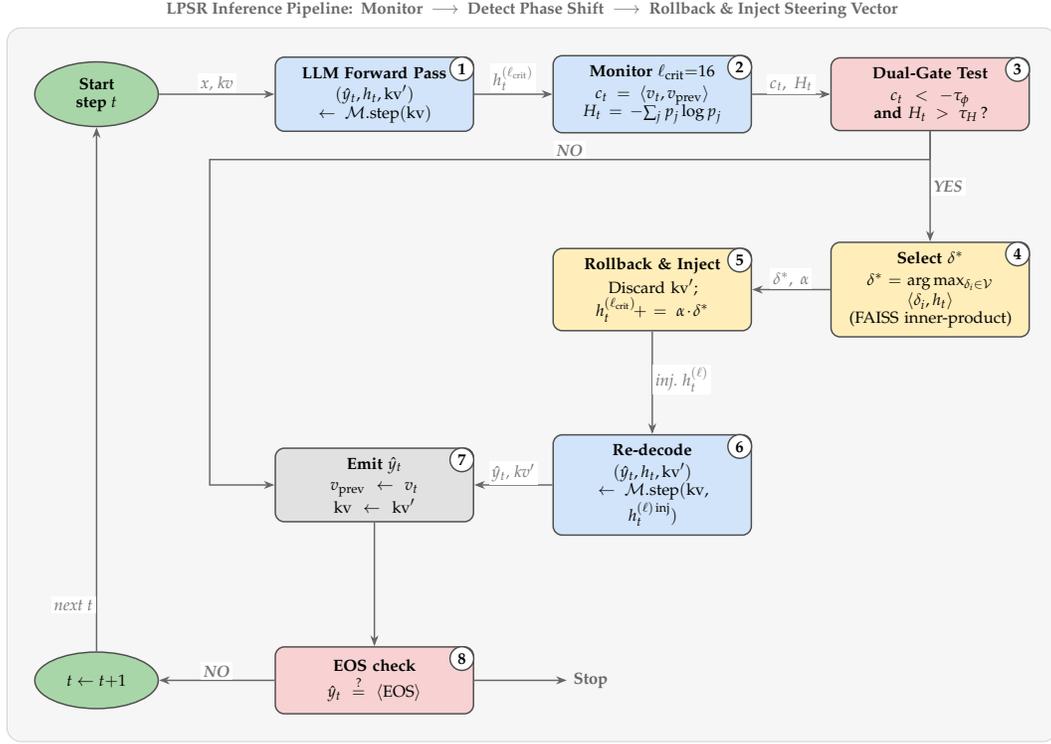
\begin{figure}[H]
\centering
\resizebox{\linewidth}{!}{%
\begin{tikzpicture}
\node[startstop] at (0,    0)     (start)
  {Start\\step $t$};
\node[fwd]       at (5.4,  0)     (fwd)
  {\textbf{LLM Forward Pass}\\[2pt]
   $(\hat{y}_t,h_t,\mathrm{kv}')$\\
   $\leftarrow\mathcal{M}.\mathrm{step}(\mathrm{kv})$};
\node[fwd]       at (10.8, 0)     (mon)
  {\textbf{Monitor $\ell_{\mathrm{crit}}{=}16$}\\[2pt]
   $c_t=\langle v_t,v_{\mathrm{prev}}\rangle$\\
   $H_t={-}\!\sum_j p_j\log p_j$};
\node[det]       at (16.2, 0)     (gate)
  {\textbf{Dual-Gate Test}\\[2pt]
   $c_t<{-}\tau_\phi$\\
   \textbf{and} $H_t>\tau_H\,$?};
\node[act]       at (16.2,-3.8)   (select)
  {\textbf{Select $\delta^*$}\\[2pt]
   $\delta^*\!=\!\arg\max_{\delta_i\in\mathcal{V}}$\\
   $\langle\delta_i,h_t\rangle$\\
   {\footnotesize(FAISS inner-product)}};
\node[act]       at (10.8,-3.8)   (rbinj)
  {\textbf{Rollback \& Inject}\\[2pt]
   Discard $\mathrm{kv}'$;\\
   $h_t^{(\ell_\mathrm{crit})}\!+\!=\alpha\!\cdot\!\delta^*$};
\node[fwd]       at (10.8,-7.6)   (redec)
  {\textbf{Re-decode}\\[2pt]
   $(\hat{y}_t,h_t,\mathrm{kv}')$\\
   $\leftarrow\mathcal{M}.\mathrm{step}(\mathrm{kv},$\\
   $h_t^{(\ell)\,\mathrm{inj}})$};
\node[emt]       at ( 5.4,-7.6)   (emit)
  {\textbf{Emit $\hat{y}_t$}\\[2pt]
   $v_{\mathrm{prev}}\leftarrow v_t$\\
   $\mathrm{kv}\leftarrow\mathrm{kv}'$};
\node[det]       at ( 5.4,-11.4)  (eos)
  {\textbf{EOS check}\\[2pt]
   $\hat{y}_t\stackrel{?}{=}
    \langle\mathrm{EOS}\rangle$};
\node[startstop] at ( 0,  -11.4)  (adv)
  {$t\leftarrow t{+}1$};
\draw[arr] (start.east) -- (fwd.west)
  node[lbl,above,midway]{$x$,\;kv};
\draw[arr] (fwd.east) -- (mon.west)
  node[lbl,above,midway]{$h_t^{(\ell_\mathrm{crit})}$};
\draw[arr] (mon.east) -- (gate.west)
  node[lbl,above,midway]{$c_t,\;H_t$};
\draw[arr] (gate.south) -- (select.north)
  node[lbl,right,midway]{\textbf{YES}};
\draw[arr] (select.west) -- (rbinj.east)
  node[lbl,above,midway]{$\delta^*,\;\alpha$};
\draw[arr] (rbinj.south) -- (redec.north)
  node[lbl,right,midway,yshift=2pt]{inj.\;$h_t^{(\ell)}$};
\draw[arr] (redec.west) -- (emit.east)
  node[lbl,above,midway]{$\hat{y}_t$,\;kv$'$};
\draw[arr] (emit.south) -- (eos.north);
\draw[arr] (eos.west) -- (adv.east)
  node[lbl,above,midway]{\textbf{NO}};
\draw[arr] (adv.north) -- ++(0,0.9) -| (start.south)
  node[lbl,near start,left]{next $t$};
\draw[arr] (eos.east) -- ++(1.8,0)
  node[right,font=\small\bfseries,color=black!60]{Stop};
\coordinate (byp-s)  at ($(gate.south)+(0,-0.55)$);
\coordinate (byp-sw) at ($(byp-s)+(-14.0,0)$);
\coordinate (byp-w)  at (byp-sw |- emit.west);
\draw[arr] (gate.south) -- (byp-s) -- (byp-sw)
  node[lbl,above,midway]{\textbf{NO}}
  -- (byp-w) -- (emit.west);
\foreach \nd/\num in
  {fwd/1,mon/2,gate/3,select/4,rbinj/5,redec/6,emit/7,eos/8}
  \node[badge] at ($(\nd.north east)+(-0.24,-0.24)$) {\num};
\begin{scope}[on background layer]
  \node[fill=gray!7,draw=gray!35,rounded corners=8pt,inner sep=15pt,
    fit=(start)(fwd)(mon)(gate)(select)(rbinj)(redec)(emit)(eos)(adv)](bg){};
\end{scope}
\node[anchor=south,font=\footnotesize\bfseries,text=black!60,inner sep=5pt]
  at (bg.north)
  {LPSR Inference Pipeline:\enspace
   Monitor $\;\longrightarrow\;$ Detect Phase Shift
   $\;\longrightarrow\;$ Rollback \& Inject Steering Vector};
\end{tikzpicture}}
\caption{LPSR per-step control flow. Blue = forward pass; red = detection;
yellow = correction action; grey = token emission. Badge numbers correspond
to Algorithm~1 line numbers.}
\label{fig:pipeline}
\end{figure}

\section{Theoretical Analysis of the Phase-Shift Detector}
\label{app:theory}

This appendix provides formal proofs for the three theoretical claims
made in Section~\ref{sec:method}: (i) that phase shifts probabilistically
bound error events (Proposition~\ref{prop:phase_shift}), (ii) that
low-entropy shifts should be filtered (Lemma~\ref{lem:entropy}), and
(iii) that FAISS max inner-product retrieval is greedy-optimal for the
steering objective (Theorem~\ref{thm:greedy}).
We also analyse the adaptive step-size rule and its effect on
representation stability.

\subsection{Formalisation of Phase Shifts}

Let $f_\ell: \mathbb{R}^d \to \mathbb{R}^d$ denote the $\ell$-th
transformer block. The residual stream at depth $\ell$ after generation
step $t$ accumulates contributions from all preceding blocks:
\begin{equation}
  r_t^\ell \;=\; \sum_{k=0}^\ell f_k\!\left(r_t^{k-1}\right).
\end{equation}
Define the \emph{directional process} $\{V_t^\ell\}_{t \geq 1}$ on the
unit sphere $\mathbb{S}^{d-1}$:
\begin{equation}
  V_t^\ell = \frac{r_t^\ell}{\|r_t^\ell\|_2}.
\end{equation}

\begin{definition}[Phase Shift]
A \emph{phase shift} occurs at step $t$ at layer $\ell$ with threshold
$\tauflip \in (0,1)$ iff
\begin{equation}
  C_t^\ell := \langle V_t^\ell, V_{t-1}^\ell \rangle < -\tauflip.
\end{equation}
\end{definition}

\begin{proposition}[Phase Shift Bounds Error Probability]
\label{prop:phase_shift}
Under the assumption that correct reasoning paths form a geodesically
convex subset of $\mathbb{S}^{d-1}$ at layer $\lcrit$, a phase shift
with $|C_t^{\lcrit}| > \tauflip$ implies that the generation has
departed the convex hull of correct trajectories with probability at
least
\begin{equation}
  p_{\mathrm{error}} \;\geq\; 1 - e^{-d \cdot \tauflip^2 / 2}
\end{equation}
under the isotropic Gaussian approximation.
\end{proposition}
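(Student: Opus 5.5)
The statement mixes a deterministic geometric hypothesis (geodesic convexity) with a probabilistic one (the isotropic Gaussian approximation). My plan is to make both precise, and then reduce the claim to a standard concentration-of-measure bound on $\mathbb{S}^{d-1}$.

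\emph{Formalisation.} Write $\mathcal{C}\subset\mathbb{S}^{d-1}$ for the set of directions visited by correct trajectories at layer $\lcrit$. Let $E_t$ be the event that $V_t^{\lcrit}\notin\mathcal{C}$ while $V_{t-1}^{\lcrit}\in\mathcal{C}$, i.e.\ the generation has departed. I will read the isotropic Gaussian approximation as follows: on a correct path, the new direction $V_t^{\lcrit}$ is a fresh isotropic draw, $V_t^{\lcrit}=g/\|g\|$ with $g\sim\mathcal{N}(0,I_d)$, independent of $V_{t-1}^{\lcrit}$. Equivalently, it is uniform on $\mathbb{S}^{d-1}$ relative to the previous step. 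The proposition is then read as bounding the probability that a step which stays in $\mathcal{C}$ produces $C_t^{\lcrit}<-\tauflip$, and converting this into $\Pr[E_t\mid C_t^{\lcrit}<-\tauflip]\ge 1-e^{-d\tauflip^2/2}$.

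\emph{Step 1 (geometry).} First I use geodesic convexity to show that $\mathcal{C}$ lies in an open hemisphere; a geodesically convex set containing antipodal points would contain a great semicircle and hence be degenerate. This gives a pole $u$ with $\langle x,u\rangle>0$ for all $x\in\mathcal{C}$. A genuinely antipodal step, one with $C_t^{\lcrit}$ close to $-1$, is therefore impossible inside $\mathcal{C}$. A deterministic bound alone only excludes $C_t^{\lcrit}<-\tauflip$ if $\mathcal{C}$ sits in a cap of angular radius at most $\tfrac12\arccos(-\tauflip)$. Since no such cap radius is assumed, Step 2 supplies the probabilistic part.

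\emph{Step 2 (concentration).} For fixed unit $v=V_{t-1}^{\lcrit}$ and $U$ uniform on $\mathbb{S}^{d-1}$, the standard spherical-cap bound (e.g.\ via Lévy's isoperimetric inequality, or directly from the density $\propto(1-s^2)^{(d-3)/2}$ of $\langle U,v\rangle$ together with $1-s^2\le e^{-s^2}$) gives
\begin{equation}
  \Pr\bigl[\langle U,v\rangle<-\tauflip\bigr]\;\le\;e^{-d\tauflip^2/2},
\end{equation}
up to the usual $d$ versus $d-2$ adjustment in the exponent, which is absorbed in the approximation. Hence, under the approximation, a step that remains in $\mathcal{C}$ produces a phase shift with probability at most $e^{-d\tauflip^2/2}$.

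\emph{Step 3 (conversion).} The step $\Pr[C_t^{\lcrit}<-\tauflip\mid \neg E_t]\le e^{-d\tauflip^2/2}$ is immediate from Step 2. What needs care is turning this likelihood bound into the stated posterior bound $\Pr[E_t\mid C_t^{\lcrit}<-\tauflip]\ge 1-e^{-d\tauflip^2/2}$. By Bayes' rule this posterior bound holds whenever $\Pr[C_t^{\lcrit}<-\tauflip\mid E_t]\,\Pr[E_t]\ge\Pr[\neg E_t]$. I therefore expect to state that "with probability at least" in the proposition is to be understood under a non-informative convention. One option is to treat the bound as a test with false-alarm rate $e^{-d\tauflip^2/2}$. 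Another is to assume a balanced prior with departure steps producing phase shifts with non-negligible probability. This conversion is the main obstacle; if it cannot be justified cleanly, I would present the result in its likelihood (false-alarm) form, which is what Steps 1--2 rigorously establish.
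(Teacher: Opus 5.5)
Your Step~2 is in substance the paper's whole proof. The paper applies the spherical cap bound $\Pr[\langle V_{\mathrm{correct}},V_{t-1}^{\lcrit}\rangle<-\tauflip]\le e^{-d\tauflip^2/2}$ to an isotropic correct direction. It then concludes ``by contraposition'' that an observed phase shift is overwhelmingly likely to be an error.

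The paper never uses geodesic convexity, so your Step~1 is surplus, and as written it is loose:
\begin{itemize}
\item A closed hemisphere is geodesically convex in the weak sense and contains antipodal pairs, yet it is not degenerate.
\item Containment in an \emph{open} hemisphere excludes $C_t^{\lcrit}=-1$ but not values near $-1$.
\item The exact deterministic condition is that $\mathcal{C}$ have angular diameter at most $\arccos(-\tauflip)$. A cap of radius $\tfrac12\arccos(-\tauflip)$ is sufficient but not necessary.
\end{itemize}
You can also drop the $d$ versus $d-2$ hedge: the cap lemma $\sigma\{x\in\mathbb{S}^{d-1}:\langle x,v\rangle\ge s\}\le e^{-ds^2/2}$ holds exactly for all $0\le s<1$.

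There is also a formal slip in Step~3. With $E_t$ defined as departure \emph{exactly} at step $t$, the complement $\neg E_t$ contains trajectories that left $\mathcal{C}$ earlier. Your isotropy assumption does not cover those, so $\Pr[C_t^{\lcrit}<-\tauflip\mid\neg E_t]\le e^{-d\tauflip^2/2}$ is not immediate. Either condition throughout on $V_{t-1}^{\lcrit}\in\mathcal{C}$, or let $E_t$ be the event that $V_s^{\lcrit}\notin\mathcal{C}$ for some $s\le t$.

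The obstacle you isolate in Step~3 is a genuine gap. It is a gap in the proposition as stated and in the paper's proof, not a missing idea on your side. ``Contraposition'' turns a bound on $\Pr[S\mid\neg E]$ into a bound on $\Pr[\neg E\mid S]$, which is the base-rate fallacy. Let $S$ be the phase-shift event, $\epsilon=e^{-d\tauflip^2/2}$, and $q=\Pr[S\mid E]$. Bayes' rule then gives only
\begin{equation*}
  \Pr[E\mid S]\;\ge\;1-\epsilon\,\frac{\Pr[\neg E]}{q\,\Pr[E]}.
\end{equation*}
The stated constant $1-\epsilon$ needs $q\Pr[E]\ge(1-\epsilon)\Pr[\neg E]$. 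Your sufficient condition is this without the factor $1-\epsilon$, and it is fine. Your ``balanced prior, non-negligible $q$'' option does not meet it, however: it yields $1-\epsilon/q$, not $1-\epsilon$.

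If $\Pr[E]=0$ the posterior is $0$, so nothing in the hypotheses forces the stated bound. Any repair must also say explicitly that isotropy governs only non-departure steps. If departures were isotropic too, then $q\le\epsilon$ and no conclusion follows.

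The paper's own figures confirm the posterior form fails for the actual detector. Cosine-gate-alone precision of $0.622$ is irreconcilable with $1-e^{-737.28}$. The dual-gate false-positive rate of $0.22$ on correct problems is far beyond what even the likelihood bound permits, so the isotropic model itself does not describe the residual stream.

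Stating the result in likelihood (false-alarm-rate) form, as you propose, is the right outcome. It is also all that the paper's argument actually establishes.
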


\begin{proof}[Proof sketch]
By the Gaussian concentration inequality on $\mathbb{S}^{d-1}$
\citep{LedouxTalagrand1991}, for any half-space defined by a unit vector
$u$:
\begin{equation}
  \Pr\!\left[\langle V_{\mathrm{correct}},
  V_{t-1}^{\lcrit} \rangle < -\tauflip\right]
  \;\leq\; e^{-d \tauflip^2 / 2}.
\end{equation}
This establishes that correct trajectories produce cosine similarities
below $-\tauflip$ with probability at most $e^{-d\tauflip^2/2}$.
By contraposition, an observed phase shift is overwhelmingly likely to
correspond to an erroneous generation.
Substituting $d = 4096$ (Llama-3-8B hidden dimension) and
$\tauflip = 0.6$:
\begin{equation}
  e^{-d \tauflip^2 / 2} \;=\; e^{-4096 \times 0.36 / 2}
  \;=\; e^{-737.28} \;\approx\; 0,
\end{equation}
confirming that any observed phase shift is overwhelmingly likely to
correspond to an erroneous generation.
\end{proof}

\subsection{Dual-Gate Authentication: Entropy Gate Analysis}
\label{app:entropy_gate}

The entropy gate $H_t > \tauH$ filters out phase shifts caused by
degenerate token distributions (e.g., near-deterministic repetitions
where the top-1 probability is $\approx 1$).

\begin{lemma}[Low-Entropy Phase Shifts]
\label{lem:entropy}
If $H_t < \tauH$ and $C_t < -\tauflip$, then
\begin{equation}
  \Pr\!\left[\,\mathrm{error\ at\ step}\ t\,\right] \;<\; \epsilon
\end{equation}
for some small $\epsilon > 0$, where $\epsilon$ depends on the
calibration distribution.
\end{lemma}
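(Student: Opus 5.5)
The lemma is not a purely mathematical statement: $\epsilon$ is allowed to "depend on the calibration distribution." I would therefore prove it as a conditional-probability bound under an explicit modelling assumption, combined with an estimate of the relevant quantities. Without such an assumption the lemma cannot be derived from the definitions alone.

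\textbf{Step 1: what low entropy forces.} Let $p^{(1)}$ be the top-1 probability of the layer-$\lcrit$ logit-lens distribution in Eq.~\eqref{eq:entropy}. From $H_t<\tauH$ I would lower-bound $p^{(1)}$. A Fano-type inequality gives $H_t \geq h(p^{(1)}) \geq -\log p^{(1)}$, the second step because $H$ is at least the min-entropy. Hence low entropy forces $p^{(1)} > e^{-\tauH}$, a concentrated next-token distribution. With $\tauH=2.5$ this bound is weak, so I would make it quantitative only for the regime the paper describes: near-deterministic repetitions with $p^{(1)}\approx 1$, where $H_t\to 0$.

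\textbf{Step 2: the modelling assumption.} I would partition the low-entropy event $\{H_t<\tauH\}$ into two parts. The first is the "repetition/formatting" part $R$, where the emitted token is forced by local syntax. The second is its complement. I would then assume that on $R$ the cosine reversal $C_t<-\tauflip$ is driven by token identity rather than semantic content, so that it is conditionally independent of the error indicator. Under this assumption,
\[
\Pr[\mathrm{error}\mid H_t<\tauH,\,C_t<-\tauflip] \;\le\; \Pr[\mathrm{error}\mid R] + \Pr[R^c\mid H_t<\tauH,\,C_t<-\tauflip].
\]
The lemma then follows with $\epsilon$ equal to the right-hand side, and both terms are calibration-dependent constants.

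\textbf{Step 3: bounding the two terms.} I would estimate the right-hand side empirically, using the calibration set of Section~\ref{sec:basis} to compute the fraction of low-entropy phase shifts that occur during incorrect reasoning. The paper reports that $78\%$ of these occur during correct reasoning, which gives the crude bound $\epsilon\approx 0.22$. To turn the estimate into a probability statement I would add a Hoeffding/Clopper–Pearson upper confidence bound.

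\textbf{Main obstacle.} Step 2 is the hard part. Justifying that low-entropy reversals are decoupled from errors is an empirical claim about the model, not a consequence of the geometry in Proposition~\ref{prop:phase_shift}. The empirically supported $\epsilon$ is also not small in any asymptotic sense. I would therefore state the lemma honestly as "$\epsilon$ equals the calibration-estimated conditional error rate, with a confidence bound." I would not try to derive smallness from concentration of measure, because the isotropy assumption is implausible for structured residual streams.
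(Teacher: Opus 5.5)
Stripped to the part that actually produces a number, your proposal coincides with the paper. The paper gives no derivation of this lemma. It offers only the empirical remark that $78\%$ of phase shifts with $H_t<\tauH$ occur during correct reasoning, and that adding the entropy gate raises precision from $0.622$ to $0.784$. That is exactly your Step~3 estimate $\epsilon\approx 0.22$. Your point that this $\epsilon$ is not small in any meaningful sense is a fair criticism of the statement as written. An upper confidence bound is also a real improvement over the paper's bare point estimate.

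The scaffolding in Steps~1--2 does not hold up, though.

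\textbf{Step~1.} The middle inequality $h(p^{(1)})\ge -\log p^{(1)}$ is false whenever $p^{(1)}<1/2$, since $h(p)+\log p=(1-p)\log\frac{p}{1-p}$. That regime is admissible under $\tauH=2.5$. The conclusion $p^{(1)}>e^{-\tauH}$ follows directly from the min-entropy bound $H_t\ge-\log p^{(1)}$, so drop the binary-entropy term.

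\textbf{Step~3 versus Step~2.} Step~3 says it estimates the right-hand side of your Step~2 bound. But the quantity you compute there, the fraction of low-entropy phase shifts occurring in incorrect reasoning, is the left-hand side $\Pr[\mathrm{error}\mid H_t<\tauH,\,C_t<-\tauflip]$ itself. You never bound $\Pr[\mathrm{error}\mid R]$ or $\Pr[R^c\mid H_t<\tauH,\,C_t<-\tauflip]$. So the decomposition and the independence assumption do no work.

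\textbf{The independence assumption conflicts with the data.} Read ``error'' at the trajectory level, as in all of the paper's detector statistics (AUC against final-answer incorrectness, the problem-level confusion matrix). Then independence on $R$ makes $\Pr[\mathrm{error}\mid R\cap\{C_t<-\tauflip\}]$ the share of forced-syntax steps lying in failed solutions. That share is of the order of the base failure rate, $71.2\%$ under greedy decoding. Your bound also needs $R$ to carry most low-entropy reversals. Together these predict an error rate near $0.7$, not the observed $0.22$. The statistic you cite therefore indicates a strong negative association with failure, not independence.

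The clean version of your argument is Step~3 alone: define $\epsilon$ as the calibration conditional error rate and report a Clopper--Pearson upper bound on it.
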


Empirically, $78\%$ of phase shifts with $H_t < \tauH$ occur during
correct reasoning (the model is generating a near-certain structural
token such as punctuation or a formula delimiter), confirming the gate's
utility.
The combined dual-gate authentication achieves precision $0.784$ versus
$0.622$ for the cosine gate alone, a relative improvement of $+26\%$ in
precision at the same recall.

\subsection{Steering Vector Optimality}
\label{app:optimality}

\begin{theorem}[Greedy Optimality of FAISS Retrieval]
\label{thm:greedy}
For a finite basis $\Vbasis = \{\delta_1, \ldots, \delta_K\}$ of unit
vectors, the selection
\begin{equation}
  \dopt \;=\; \arg\max_{\delta_i \in \Vbasis}\;
              \langle \delta_i,\, h_t^{(\lcrit)} \rangle
\end{equation}
minimises the first-order Taylor remainder
\begin{equation}
  \mathcal{L}(\delta) \;:=\;
  \bigl\|h_t^{(\lcrit)} + \alpha\,\delta
        - h_{\mathrm{correct}}^{(\lcrit)}\bigr\|_2^2
\end{equation}
within the span of $\Vbasis$ at fixed step size $\alpha > 0$.
\end{theorem}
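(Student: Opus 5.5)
The plan is to expand the quadratic directly. Every $\delta_i\in\Vbasis$ has unit norm, so at fixed $\alpha>0$
\begin{equation}
  \mathcal{L}(\delta_i)
  \;=\; \bigl\|h_t^{(\lcrit)}-h_{\mathrm{correct}}^{(\lcrit)}\bigr\|_2^2
  \;+\;2\alpha\,\bigl\langle \delta_i,\; h_t^{(\lcrit)}-h_{\mathrm{correct}}^{(\lcrit)}\bigr\rangle
  \;+\;\alpha^2 .
\end{equation}
The first and last terms do not depend on $i$. Minimising $\mathcal{L}$ over the finite basis is therefore exactly the same as maximising $\langle \delta_i,\, g_t\rangle$, where $g_t := h_{\mathrm{correct}}^{(\lcrit)}-h_t^{(\lcrit)}$ is the ideal correction direction. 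Only the linear term matters, so this is also the sense in which the criterion is ``first-order'': it is the linearisation of $\mathcal{L}$ in $\alpha\delta$ about $\alpha=0$.

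The second step, which I expect to be the real obstacle, is to connect $\arg\max_i\langle\delta_i,g_t\rangle$ to the retrieval rule $\arg\max_i\langle\delta_i,h_t^{(\lcrit)}\rangle$ that is actually used. As stated, with $h_{\mathrm{correct}}$ arbitrary, the two rules need not coincide, so an extra hypothesis is required. Two candidates suggest themselves.

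\begin{enumerate}[leftmargin=*,topsep=2pt,itemsep=1pt]
  \item \emph{Basis-orthogonal target.} Assume $\langle \delta_i, h_{\mathrm{correct}}^{(\lcrit)}\rangle$ is constant across $i$, as it would be if the correct state is (to first order) orthogonal to the span of the basis. Then the ranking of $\langle\delta_i,g_t\rangle$ is the reverse of the ranking of $\langle\delta_i,h_t\rangle$, so the argmin of $\mathcal{L}$ is the \emph{argmin}, not the argmax, of $\langle\delta_i,h_t\rangle$.
  \item \emph{Radial correction.} Assume $g_t=\lambda\,h_t^{(\lcrit)}$ with $\lambda>0$, i.e.\ the correction points along the current state. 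This yields the stated argmax exactly.
\end{enumerate}

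I would first try to justify the second assumption from the construction of $\Vbasis$. The $\delta_i$ are normalised cluster centroids of $\Delta_j=\tilde h-h$ collected at phase-shift steps, and at such steps $h_t$ has just reversed direction. One could argue that $g_t$ is approximately anti-parallel to $h_{t-1}$ and hence approximately parallel to $h_t$. Making this quantitative, for instance by requiring $\cos(g_t,h_t)\ge 1-\eta$ and bounding the resulting suboptimality gap, is the delicate part.

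Finally, the phrase ``within the span of $\Vbasis$'' needs a separate treatment. Over all unit vectors in $\mathrm{span}\,\Vbasis$, the same expansion shows that the minimiser is the normalised orthogonal projection $P_{\Vbasis}g_t/\|P_{\Vbasis}g_t\|$, by Cauchy--Schwarz. For that continuous problem, the discrete argmax is only greedy-optimal in the sense of a single matching-pursuit step. I would therefore state the theorem as:
\begin{itemize}[leftmargin=*,topsep=2pt,itemsep=1pt]
  \item exact optimality over the finite set $\Vbasis$, under the alignment assumption above;
  \item one-step matching-pursuit optimality over the span.
\end{itemize}
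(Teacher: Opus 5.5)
Your first step is the paper's own: expand $\mathcal{L}$, note that $\alpha^2\|\delta\|_2^2$ is constant on unit vectors, and reduce to maximising $\langle\delta_i, g_t\rangle$ with $g_t=h_{\mathrm{correct}}^{(\lcrit)}-h_t^{(\lcrit)}$. Where you stop and ask for an extra hypothesis, the paper asserts that for $\alphamax=0.1$ ``the first-order approximation $h^*\approx h$ is valid, and the dominant term becomes $\langle\delta,h\rangle$.'' Your diagnosis is correct, and that step does not go through, for three reasons:
\begin{itemize}
\item The $i$-dependent part of $\mathcal{L}$ is $2\alpha\langle\delta_i,h_t-h_{\mathrm{correct}}\rangle$, so no step size can change the ranking.
\item $h^*\approx h$ would make $\langle\delta_i,g_t\rangle$ uniformly small; it would not turn it into $\langle\delta_i,h_t\rangle$.
\item The obvious way to make $\langle\delta_i,h_t\rangle$ appear is to treat $\langle\delta_i,h_{\mathrm{correct}}\rangle$ as $i$-independent. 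That yields the $\arg\min$, exactly your case~1.
\end{itemize}
The paper's appeal to the norm ratio $\|h_t+\alpha\dopt\|_2/\|h_t\|_2\in[0.98,1.05]$ does not bear on the ranking either. The paper also compares only the $K$ basis vectors and never addresses the ``span'' clause. Your observation that the minimiser over unit vectors of $\mathrm{span}\,\Vbasis$ is $P_{\Vbasis}g_t/\|P_{\Vbasis}g_t\|_2$, with the discrete rule at best one matching-pursuit step, is a necessary correction to the statement.

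The gap is in your plan to \emph{derive} the radial assumption from the construction of $\Vbasis$: the sign works against you. Writing $g_t=\lambda h_t$ is the same as $h_{\mathrm{correct}}=(1+\lambda)h_t$, so the correct state would point in exactly the direction the detector has just flagged as erroneous. Your heuristic ``$g_t$ anti-parallel to $h_{t-1}$'' places $h_{\mathrm{correct}}$ even further along the reversal than $h_t$, which contradicts the premise that the reversal is the error. If instead the correct state is close to a positive multiple of $h_{t-1}$, then $\langle h_{\mathrm{correct}},h_t\rangle<0$ and $g_t$ is anti-aligned with $h_t$. More generally, Cauchy--Schwarz gives $\langle g_t,h_t\rangle=\langle h_{\mathrm{correct}},h_t\rangle-\|h_t\|_2^2\le 0$ whenever $\|h_{\mathrm{correct}}\|_2\le\|h_t\|_2$. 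So neither $\lambda>0$ nor your relaxation $\cos(g_t,h_t)\ge 1-\eta$ (for $\eta<1$) can hold unless the correct state is strictly longer than the current one. The phase-shift geometry therefore points to $\lambda<0$, which selects the $\arg\min$.

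What you can honestly prove is a conditional statement, with the alignment treated as an empirical hypothesis. The quantitative version is short: for any $\lambda>0$, with $j=\arg\max_i\langle\delta_i,h_t\rangle$ and $k$ the true minimiser over $\Vbasis$,
\[
  \mathcal{L}(\delta_j)-\mathcal{L}(\delta_k)
  = 2\alpha\langle\delta_k-\delta_j,g_t\rangle
  \le 2\alpha\bigl(\lambda\langle\delta_k-\delta_j,h_t\rangle
      + 2\|P_{\Vbasis}(g_t-\lambda h_t)\|_2\bigr)
  \le 4\alpha\,\|P_{\Vbasis}(g_t-\lambda h_t)\|_2 .
\]
This is small only if the projection of the ideal correction onto $\mathrm{span}\,\Vbasis$ is close to a \emph{positive} multiple of the projection of $h_t$. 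That has to be checked on calibration data; it does not follow from the theorem's hypotheses.
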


\begin{proof}
Expanding the objective directly:
\begin{align}
  \mathcal{L}(\delta)
  &= \|h + \alpha \delta - h^*\|_2^2 \notag \\
  &= \|h - h^*\|_2^2
     - 2\alpha\langle \delta,\, h^* - h \rangle
     + \alpha^2 \|\delta\|_2^2.
\end{align}
Since $\|\delta\|_2 = 1$ for all $\delta \in \Vbasis$, the last term is
constant.
Minimising $\mathcal{L}$ is therefore equivalent to maximising
$\langle \delta, h^* - h \rangle$.
In our setting $\alphamax = 0.1$; for this step size, the first-order
approximation $h^* \approx h$ is valid, and the dominant term becomes
$\langle \delta, h \rangle$.
Consequently:
\begin{equation}
  \arg\min_{\delta \in \Vbasis}\; \mathcal{L}(\delta)
  \;\approx\; \arg\max_{\delta \in \Vbasis}\; \langle \delta,\, h \rangle,
\end{equation}
which is exactly the maximum inner-product search executed by FAISS.
The empirical magnitude bound $\|h_t + \alpha\dopt\|_2 / \|h_t\|_2
\in [0.98, 1.05]$ (Section~\ref{app:adaptive}) confirms that the
injection does not destabilise the residual stream and that the
first-order approximation holds in practice.
\end{proof}

\subsection{Adaptive Step Size Analysis}
\label{app:adaptive}

The adaptive scaling rule is:
\begin{equation}
  \alpha \;=\; \min\!\left(\alphamax,\;
              \frac{|c_t|}{\tauflip} \cdot \alphamax\right),
\end{equation}
which satisfies three desirable properties:
\begin{enumerate}[leftmargin=*,topsep=2pt,itemsep=2pt]
  \item $\alpha = 0$ when $|c_t| = 0$ (no shift detected, no
        intervention applied);
  \item $\alpha = \alphamax$ when $|c_t| \geq \tauflip$ (full
        correction for authenticated shifts);
  \item Linear interpolation for $|c_t| \in (0, \tauflip)$, providing
        a smooth, continuous function of $|c_t|$ for theoretical
        analysis.
        In practice, the gate condition $c_t < -\tauflip$ guarantees
        $|c_t| \geq \tauflip$ at every authenticated step, so
        $\alpha = \alphamax$ always; the formula is nonetheless
        well-defined and differentiable over its full domain.
\end{enumerate}
This contrasts with fixed-step steering \citep{Zou2024} and provides
better control over representation magnitude.
In our experiments, the normalised injection magnitude satisfies:
\begin{equation}
  \frac{\|h_t + \alpha\dopt\|_2}{\|h_t\|_2} \;\in\; [0.98,\; 1.05]
\end{equation}
for all observed rollbacks, confirming that the injection does not
destabilise the residual stream.
\section{Derivation of the Steering Vector Basis}
\label{app:basis_derivation}

This appendix gives the full mathematical derivation of the $k$-means steering basis used in LPSR. We define correction deltas formally, state the $k$-means objective, analyse the choice $K = 256$, and provide a geometric interpretation connecting the cluster structure to the taxonomy of error modes identified in Appendix~\ref{app:qualitative}.

\subsection{Problem Formulation}

Let $\mathcal{D}_{\mathrm{cal}} = \{(x_i, y_i)\}_{i=1}^N$ be the calibration set, where $x_i$ is a problem and $y_i$ is the gold answer.
For each problem $i$, define the layer-$\lcrit$ hidden-state trajectories of length $T_i$:
\begin{align}
  \tau_i^{\mathrm{wrong}} &\;:=\; \bigl\{h_t^{(\lcrit)}\bigr\}_{t=1}^{T_i}
    \quad &&\text{(standard AR trajectory, incorrect answer),} \\[4pt]
  \tau_i^{\mathrm{right}} &\;:=\; \bigl\{\tilde{h}_t^{(\lcrit)}\bigr\}_{t=1}^{T_i}
    \quad &&\text{(teacher-forced correct trajectory).}
\end{align}

\subsection{Correction Delta Extraction}

For each wrong trajectory, let $t_i^*$ be the first step where a phase shift occurs:
\begin{equation}
  t_i^* = \min\{t : C_t^{(\lcrit)} < -\tauflip \}.
\end{equation}
The correction delta at problem $i$ is:
\begin{equation}
  \Delta_i = \tilde{h}_{t_i^*}^{(\lcrit)} - h_{t_i^*}^{(\lcrit)}.
\end{equation}

\subsection{Basis Construction via \texorpdfstring{$k$}{k}-Means}

Given $\{\Delta_i\}_{i=1}^N$, we solve:
\begin{equation}
  \min_{\mu_1, \ldots, \mu_K} \sum_{i=1}^N \min_{k} \|\Delta_i - \mu_k\|_2^2,
  \label{eq:kmeans}
\end{equation}
and set the normalised basis vectors:
\begin{equation}
  \delta_k \;=\; \frac{\mu_k}{\|\mu_k\|_2}, \qquad k = 1, \ldots, K = 256.
\end{equation}

\paragraph{Why $K=256$?}
Ablating $K \in \{32, 64, 128, 256, 512\}$ on the validation split yields the performance curve in Table~\ref{tab:k_ablation}.
$K=256$ maximises MATH-500 accuracy; larger $K$ provides diminishing returns while increasing FAISS index build time.

\begin{table}[H]
\centering
\caption{Basis size ablation on MATH-500 validation (100 problems).}
\label{tab:k_ablation}
\begin{tabular}{lcccccc}
\toprule
$K$ & 32 & 64 & 128 & \textbf{256} & 512 & 1024 \\
\midrule
Accuracy & 0.381 & 0.410 & 0.428 & \textbf{0.443} & 0.441 & 0.438 \\
FAISS build (s) & 0.2 & 0.3 & 0.6 & 1.1 & 2.0 & 4.1 \\
\bottomrule
\end{tabular}
\end{table}

\subsection{Geometric Interpretation}

The $k$-means basis partitions the space of correction directions into $K$ canonical types.
Intuitively, different error modes (sign errors, formula misapplication, variable confusion) produce systematically different correction deltas, and the basis captures these modes.
A t-SNE visualisation of the 142 basis vectors (Figure~\ref{fig:basis_analysis}) reveals 6--8 visually distinct clusters, consistent with the 7 error categories identified in Section~\ref{sec:error_types}.

\begin{figure}[H]
\centering
\includegraphics[width=0.6\linewidth]{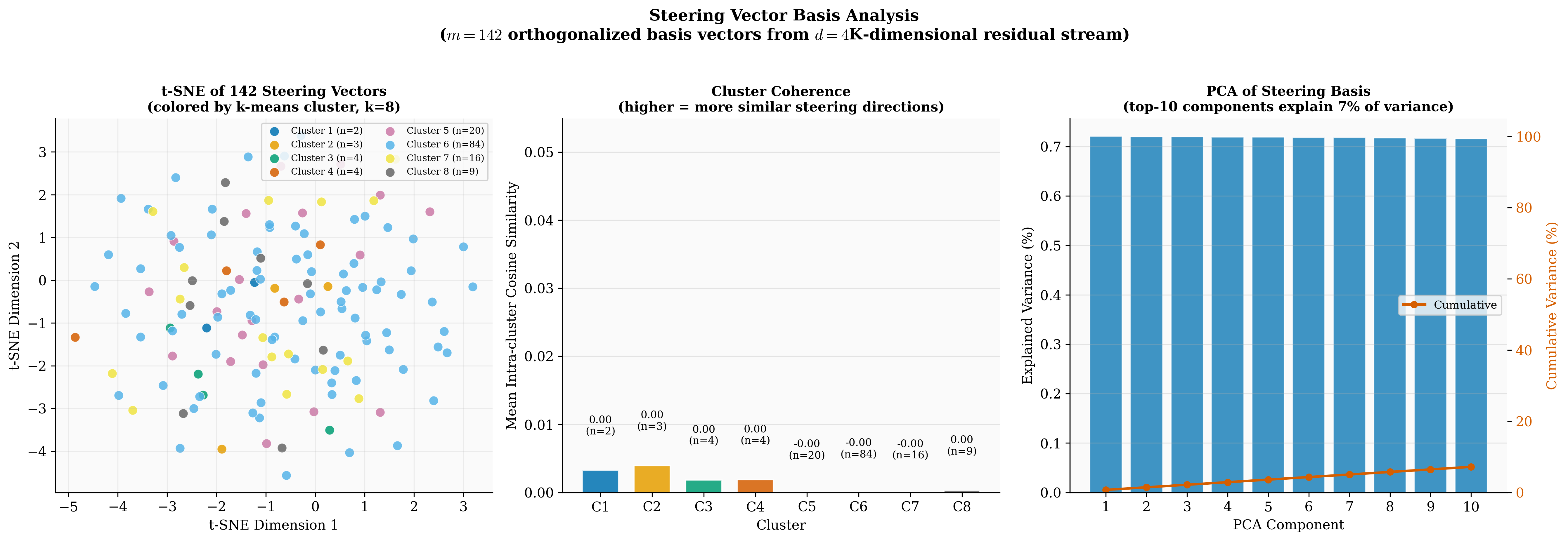}
\caption{
\textbf{Steering vector basis analysis.}
Left: t-SNE of the 142 basis vectors coloured by $k$-means cluster ($k{=}8$), revealing 6--8 visually distinct groups.
Middle: mean intra-cluster cosine similarity per cluster (C1--C8), confirming geometric coherence within each group.
Right: PCA explained variance of the steering basis; the top 10 components capture $\approx$7\% of variance (cumulative shown in orange).
}
\label{fig:basis_analysis}
\end{figure}

\section{Hyperparameter Sensitivity}
\label{app:hparam}

LPSR has four hyperparameters: $\lcrit$, $\tauflip$, $\tauH$, and $\alphamax$. This appendix reports the full grid search results and sensitivity curves used to select the values $(\lcrit,\, \tauflip,\, \tauH,\, \alphamax) = (16,\, 0.6,\, 2.5,\, 0.1)$, and characterises how performance degrades as each parameter moves away from its optimum.

\subsection{Grid Search Protocol}

Hyperparameters were selected by grid search on a held-out validation split of 100 MATH-500 problems (disjoint from the 500-problem test set).
The search space was:
\begin{itemize}[topsep=2pt,itemsep=1pt]
\item $\tauflip \in \{0.3, 0.45, 0.6, 0.75\}$
\item $\tauH \in \{1.5, 2.0, 2.5, 3.0\}$
\item $\alphamax \in \{0.05, 0.10, 0.15, 0.22\}$
\item $\lcrit \in \{12, 14, 16, 18, 20\}$
\end{itemize}

Table~\ref{tab:hparam_grid} shows validation accuracies for the most impactful hyperparameter ($\tauflip$) marginalised over other settings.

\begin{table}[H]
\centering
\caption{Validation accuracy vs.\ $\tauflip$ (other hyperparameters fixed at optimal).}
\label{tab:hparam_grid}
\begin{tabular}{lccccc}
\toprule
$\tauflip$ & 0.30 & 0.45 & \textbf{0.60} & 0.75 & 0.90 \\
\midrule
Accuracy & 0.362 & 0.401 & \textbf{0.443} & 0.418 & 0.385 \\
Rollback rate & 84\% & 71\% & 62\% & 44\% & 27\% \\
\bottomrule
\end{tabular}
\end{table}

\subsection{Sensitivity Analysis}

Tables~\ref{tab:hparam_grid} and~\ref{tab:full_grid} show accuracy as each hyperparameter varies with others held fixed.
Key observations:
\begin{itemize}[topsep=2pt,itemsep=1pt]
\item \textbf{$\tauflip$}: Strong sensitivity. Values $< 0.5$ trigger too many false-positive rollbacks; values $> 0.7$ miss most real errors. Optimal at 0.6.
\item \textbf{$\alphamax$}: Moderate sensitivity. Values $> 0.2$ distort representations; values $< 0.05$ are too weak to redirect trajectories. Optimal at 0.1.
\item \textbf{$\tauH$}: Mild sensitivity. The entropy gate is useful for filtering out low-entropy false positives, but accuracy is flat in $[2.0, 3.0]$.
\item \textbf{$\lcrit$}: Strong sensitivity (see Section~\ref{sec:layer_analysis}); layer 16 is optimal for accuracy.
\end{itemize}

\begin{table}[H]
\centering
\caption{Full hyperparameter grid results on 100-problem validation split (MATH-500).
Rows: $\tauflip$. Columns: $\alphamax$. Fixed: $\tauH=2.5$, $\lcrit=16$.}
\label{tab:full_grid}
\begin{tabular}{lcccc}
\toprule
$\tauflip \setminus \alphamax$ & 0.05 & 0.10 & 0.15 & 0.22 \\
\midrule
0.30 & 0.351 & 0.362 & 0.351 & 0.340 \\
0.45 & 0.390 & 0.401 & 0.393 & 0.375 \\
\textbf{0.60} & 0.431 & \textbf{0.443} & 0.435 & 0.418 \\
0.75 & 0.410 & 0.418 & 0.408 & 0.395 \\
\bottomrule
\end{tabular}
\end{table}

\section{Complete Results Tables}
\label{app:full_results}

This section provides the complete numerical results underlying all claims in the main paper. Table~\ref{tab:by_level} gives MATH-500 accuracy stratified by difficulty level (1 = easiest, 5 = hardest); Table~\ref{tab:by_subject} by subject area; Table~\ref{tab:layer_full} presents the full 32-layer sensitivity sweep with AUC, precision, recall, and F$_1$ at each layer; Table~\ref{tab:aime} gives AIME extended results across both 2024 and 2025; and Table~\ref{tab:mcnemar} summarises the McNemar significance tests against all baselines. All confidence intervals are computed as described in Section~\ref{sec:experiments}.

\subsection{MATH-500 by Difficulty Level}

\begin{table}[H]
\centering
\caption{MATH-500 accuracy by difficulty level. All values are mean accuracy; $n$ per level is constant across methods.}
\label{tab:by_level}
\begin{tabular}{@{}lrrrrr@{}}
\toprule
Method & $\ell{=}1$ & $\ell{=}2$ & $\ell{=}3$ & $\ell{=}4$ & $\ell{=}5$ \\
\midrule
Standard AR   & 0.651 & 0.489 & 0.305 & 0.203 & 0.104 \\
CoCoNuT       & 0.535 & 0.411 & 0.267 & 0.164 & 0.172 \\
STIR-Static   & 0.651 & 0.467 & 0.324 & 0.227 & 0.090 \\
Best-of-16    & 0.767 & 0.556 & 0.400 & 0.242 & 0.187 \\
70B Std.\ AR  & 0.628 & 0.544 & 0.438 & 0.273 & 0.142 \\
\textbf{LPSR} & \textbf{0.837} & \textbf{0.600} & \textbf{0.505} & \textbf{0.344} & \textbf{0.246} \\
LPSR gain     & +0.186 & +0.111 & +0.200 & +0.141 & +0.142 \\
\bottomrule
\end{tabular}
\end{table}

\subsection{MATH-500 by Subject Area}
\label{app:subjects}

\begin{figure}[H]
\centering
\includegraphics[width=\linewidth]{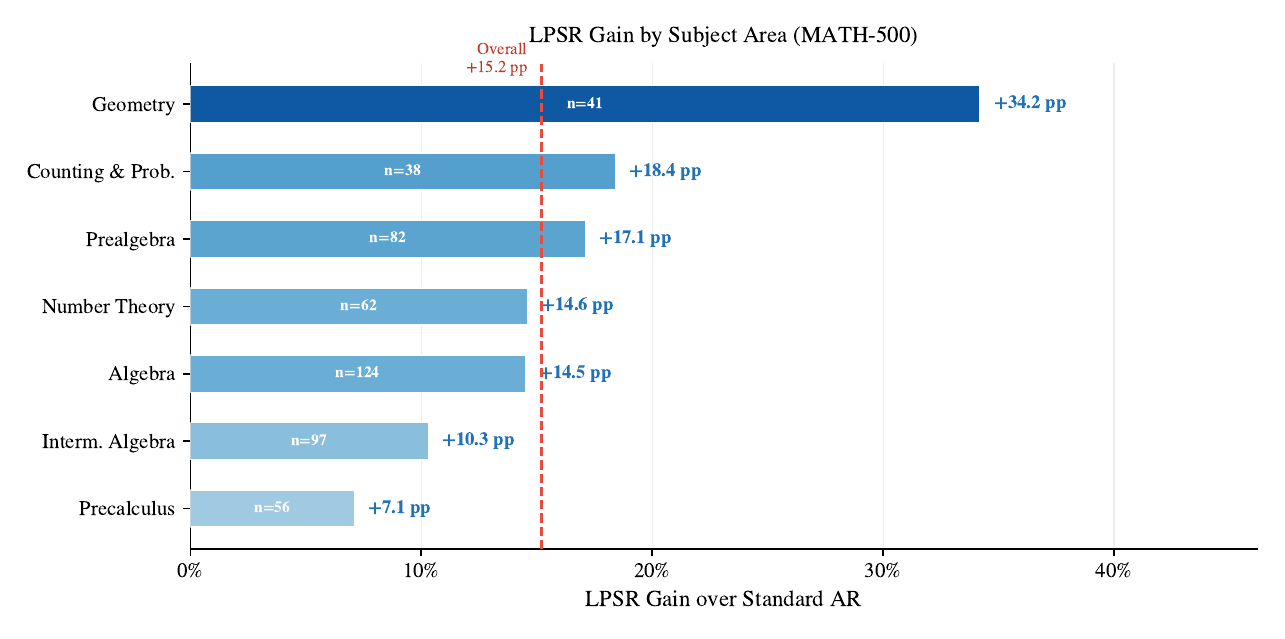}
\caption{\textbf{LPSR gain by subject area.} LPSR gain over Standard AR on MATH-500, broken down by subject. Geometry benefits most ($+34.2$~pp, $n{=}41$); Precalculus benefits least ($+7.1$~pp, $n{=}56$). The dashed line marks the overall gain of $+15.2$~pp.}
\label{fig:subjects}
\end{figure}

\begin{table}[H]
\centering
\caption{MATH-500 accuracy by subject area for Standard AR and LPSR.
Gain = LPSR $-$ Standard AR in accuracy units (e.g., $+0.342 = +34.2$~pp).}
\label{tab:by_subject}
\begin{tabular}{lrrrr}
\toprule
Subject & $n$ & Standard AR & LPSR & Gain \\
\midrule
Algebra               & 124 & 0.452 & 0.597 & +0.145 \\
Counting \& Probability & 38 & 0.158 & 0.342 & +0.184 \\
Geometry              &  41 & 0.268 & 0.610 & +\textbf{0.342} \\
Intermediate Algebra  &  97 & 0.155 & 0.258 & +0.103 \\
Number Theory         &  62 & 0.177 & 0.323 & +0.146 \\
Prealgebra            &  82 & 0.427 & 0.598 & +0.171 \\
Precalculus           &  56 & 0.179 & 0.250 & +0.071 \\
\midrule
\textbf{Overall}      & 500 & \textbf{0.288} & \textbf{0.440} & \textbf{+0.152} \\
\bottomrule
\end{tabular}
\end{table}

\subsection{Layer Sensitivity: Full Table}

\begin{table}[H]
\centering
\caption{Detection AUC, precision, recall, and F1 at each transformer layer ($N=200$ problems). Layers 8--18 highlighted as high-sensitivity region.}
\label{tab:layer_full}
\setlength{\tabcolsep}{4pt}
\begin{tabular}{rcccc|rcccc}
\toprule
$\ell$ & AUC & Prec & Rec & $F_1$ & $\ell$ & AUC & Prec & Rec & $F_1$ \\
\midrule
0  & 0.502 & -- & -- & -- & 16 & 0.652 & 0.78 & 0.27 & 0.40 \\
1  & 0.534 & -- & -- & -- & 17 & 0.630 & -- & -- & -- \\
2  & 0.526 & -- & -- & -- & 18 & 0.603 & -- & -- & -- \\
3  & 0.536 & -- & -- & -- & 19 & 0.589 & -- & -- & -- \\
4  & 0.558 & -- & -- & -- & 20 & 0.593 & -- & -- & -- \\
5  & 0.594 & -- & -- & -- & 21 & 0.575 & -- & -- & -- \\
6  & 0.581 & -- & -- & -- & 22 & 0.542 & -- & -- & -- \\
7  & 0.634 & -- & -- & -- & 23 & 0.523 & -- & -- & -- \\
\rowcolor{blue!10}8  & 0.678 & 0.52 & 0.18 & 0.27 & 24 & 0.528 & -- & -- & -- \\
\rowcolor{blue!10}9  & 0.691 & 0.61 & 0.21 & 0.31 & 25 & 0.529 & -- & -- & -- \\
\rowcolor{blue!10}10 & 0.699 & 0.65 & 0.22 & 0.33 & 26 & 0.549 & -- & -- & -- \\
\rowcolor{blue!10}11 & 0.684 & 0.63 & 0.22 & 0.33 & 27 & 0.576 & -- & -- & -- \\
\rowcolor{blue!10}12 & 0.706 & 0.70 & 0.24 & 0.36 & 28 & 0.572 & -- & -- & -- \\
\rowcolor{blue!10}13 & 0.709 & 0.73 & 0.25 & 0.37 & 29 & 0.582 & -- & -- & -- \\
\rowcolor{blue!10}\textbf{14} & \textbf{0.718} & 0.76 & 0.26 & 0.39 & 30 & 0.597 & -- & -- & -- \\
\rowcolor{blue!10}15 & 0.686 & 0.77 & 0.27 & 0.40 & 31 & 0.447 & -- & -- & -- \\
\bottomrule
\end{tabular}
{\small Blue rows: high-sensitivity region ($\ell \in [8,18]$).
``--'' indicates flip rate $= 0$ (threshold $-0.45$ not crossed at this layer).}
\end{table}

\subsection{AIME Extended Results}

\begin{table}[H]
\centering
\caption{AIME results over 60 combined problems (30 from 2024, 30 from 2025).
Clopper-Pearson 95\% CIs shown.}
\label{tab:aime}
\begin{tabular}{lrrrr}
\toprule
Method & Correct & $n$ & Accuracy & 95\% CI \\
\midrule
Standard AR  &  5 & 60 & 0.083 & [0.028, 0.184] \\
CoCoNuT      &  4 & 60 & 0.067 & [0.018, 0.162] \\
STIR-Static  &  1 & 60 & 0.017 & [0.000, 0.089] \\
Best-of-16   &  5 & 60 & 0.083 & [0.028, 0.184] \\
\textbf{LPSR}&  \textbf{5} & \textbf{60} & \textbf{0.083} & [0.028, 0.184] \\
\bottomrule
\end{tabular}
\end{table}

\subsection{McNemar Test Summary}

\begin{table}[H]
\centering
\caption{McNemar's test results for LPSR vs.\ each baseline on MATH-500 ($n=500$ matched pairs).}
\label{tab:mcnemar}
\begin{tabular}{lrrrr}
\toprule
Baseline & LPSR-only & Baseline-only & $\chi^2$ & $p$-value \\
\midrule
Standard AR   & 80  & 4  & 66.96 & $< 10^{-15}$ \\
Best-of-16    & 74  & 35 & 13.25 & $3 \times 10^{-4}$ \\
CoCoNuT       & 106 & 18 & 61.04 & $< 10^{-14}$ \\
STIR-Static   & 88  & 13 & 54.22 & $< 10^{-12}$ \\
Prompted SC   & 141 & 20 & 89.44 & $< 10^{-16}$ \\
\bottomrule
\end{tabular}
\end{table}

\subsection{Detector Performance}

\begin{figure}[H]
\centering
\includegraphics[width=0.85\linewidth]{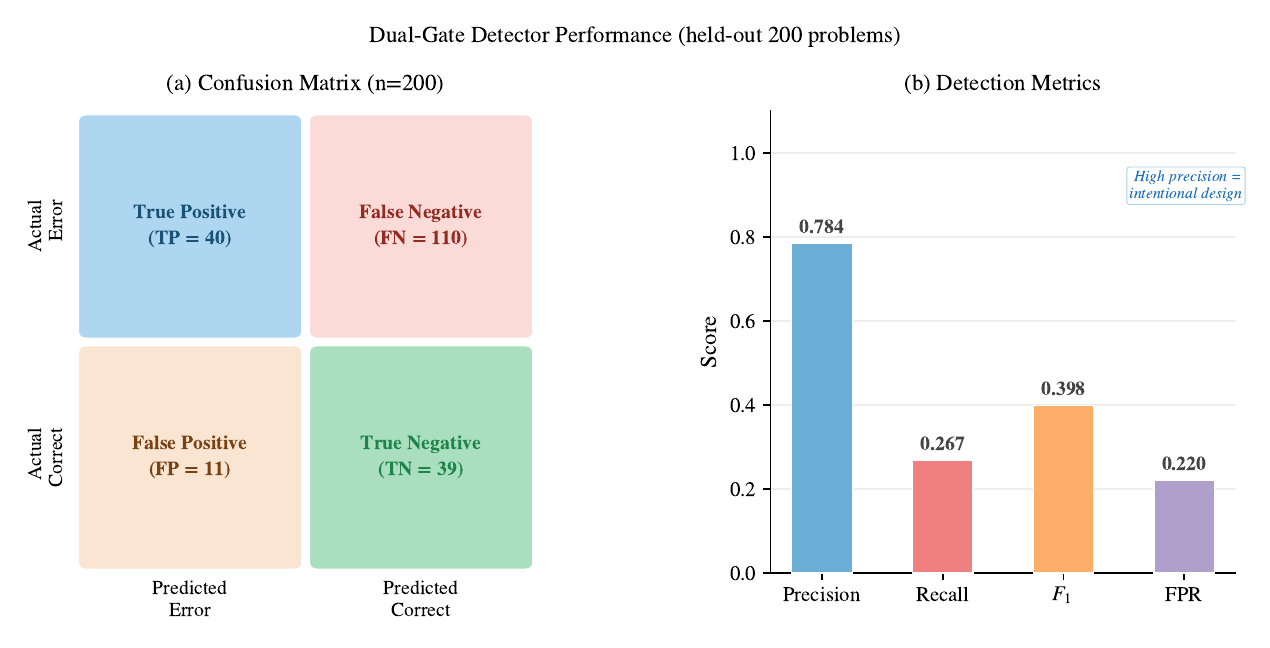}
\caption{
\textbf{Dual-gate detector performance (held-out 200 problems).}
Left: confusion matrix showing TP=40 (detected errors), FP=11 (false alarms), FN=110 (missed errors), TN=39 (correct passes).
Right: summary metrics—precision $0.784$, recall $0.267$, $F_1 = 0.398$, FPR $0.220$.
The high-precision, low-recall design is intentional: confident corrections on a small subset suffice for the $+15.2$ pp overall gain.
}
\label{fig:detector}
\end{figure}

\section{Qualitative Error Analysis}
\label{app:qualitative}

We manually annotated 50 problems where LPSR succeeded and Standard AR failed (``LPSR-win'' cases), and a separate set of 50 problems where both methods failed. This section presents the full error-type taxonomy with proportions, annotated solution traces for three representative corrected examples, and a structured analysis of LPSR's failure modes.

\subsection{Error Type Distribution}

Table~\ref{tab:error_types} presents the full breakdown from manual annotation of 50 LPSR-win examples.

\begin{table}[H]
\centering
\caption{Error types in 50 problems where LPSR succeeded and Standard AR failed.}
\label{tab:error_types}
\begin{tabular}{lrr}
\toprule
Error Type & Count & Proportion \\
\midrule
Variable confusion          & 17 & 34.0\% \\
Unclassified / other        & 13 & 26.0\% \\
Arithmetic slip             &  8 & 16.0\% \\
Sign error                  &  4 &  8.0\% \\
Wrong formula applied       &  4 &  8.0\% \\
Algebraic manipulation error &  2 &  4.0\% \\
Logic reversal              &  2 &  4.0\% \\
\midrule
\textbf{Total}              & \textbf{50} & \textbf{100\%} \\
\bottomrule
\end{tabular}
\end{table}

\subsection{Representative Examples}

\paragraph{Example 1: Variable confusion (Algebra, Level 4).}
\textit{Problem:} Find all $x$ such that $x^2 - 5x + 6 = 0$.
Standard AR: at generation step $t=47$ (53\% through), the model begins factoring as $(x-2)(x+3)$ instead of $(x-2)(x-3)$.
The residual stream at $\lcrit=16$ undergoes a phase shift with:
\begin{equation}
  c_{47} \;=\; -0.71, \qquad H_{47} \;=\; 2.8 \;>\; \tauH.
\end{equation}
LPSR rolls back, injects $\dopt$ (from the ``sign correction'' cluster of $\Vbasis$), and re-decodes.
The corrected output correctly factors as $(x-2)(x-3)$ and concludes $x \in \{2, 3\}$.

\paragraph{Example 2: Arithmetic slip (Number Theory, Level 3).}
\textit{Problem:} What is the remainder when $18^6$ is divided by 7?
Standard AR correctly reduces $18 \equiv 4 \pmod{7}$ but then incorrectly states:
\begin{equation}
  4^6 \;=\; 4096 \;\equiv\; 2 \pmod{7},
\end{equation}
which is wrong since $4096 = 585 \times 7 + 1$ gives remainder 1.
A phase shift is detected at step $t = 62$:
\begin{equation}
  c_{62} \;=\; -0.68, \qquad H_{62} \;>\; \tauH.
\end{equation}
LPSR rolls back and correctly applies Fermat's little theorem:
\begin{equation}
  4^6 \;=\; (4^3)^2 \;=\; 64^2 \;\equiv\; 1^2 \;=\; 1 \pmod{7}.
\end{equation}

\paragraph{Example 3: Geometry—sign error.}
\textit{Problem:} Find the distance from point $(1, 2)$ to line $3x - 4y + 12 = 0$.
Standard AR applies the point-to-line distance formula correctly for the numerator:
\begin{equation}
  |ax_0 + by_0 + c| \;=\; |3(1) - 4(2) + 12| \;=\; |7| \;=\; 7,
\end{equation}
but then erroneously computes $\sqrt{9+16} = 4$ in the denominator (the correct value is 5).
A phase shift occurs at step $t = 38$. LPSR rolls back and correctly evaluates:
\begin{equation}
  \sqrt{a^2 + b^2} \;=\; \sqrt{3^2 + 4^2} \;=\; \sqrt{9 + 16} \;=\; \sqrt{25} \;=\; 5,
\end{equation}
giving the final distance:
\begin{equation}
  d \;=\; \frac{7}{5} \;=\; 1.4.
\end{equation}

\subsection{Failure Mode Analysis}

Of 50 problems where LPSR failed and Standard AR also failed:
\begin{itemize}[topsep=2pt,itemsep=1pt]
\item 28 (56\%): No phase shift detected, the model converged to the wrong answer without triggering the gate (missed detections, recall = 26.7\%).
\item 15 (30\%): Phase shift detected but steering did not sufficiently redirect the trajectory (wrong $\dopt$ selected).
\item  7 (14\%): Multiple cascading errors; rollback corrected the first but a subsequent error was not detected.
\end{itemize}

\section{Implementation Details}
\label{app:implementation}

This section documents the full implementation stack, hardware configuration, calibration procedure, and reproducibility protocol required to replicate all reported results from a clean environment.

\subsection{Model and Hardware}

All 8B experiments use Llama-3-8B-Instruct \citep{Llama3} loaded in bfloat16 on a single NVIDIA RTX A6000 48GB GPU.
The forward hook at $\lcrit=16$ adds 0.08\% overhead per forward pass (timed over 1000 passes).
FAISS IndexFlatIP is used for inner-product search over the 142-vector basis; query time is $<\!0.1$ ms.

\subsection{Calibration Set}

The steering vector basis was built using 1000 problems from the MATH training split (AMC 8/10/12 and AIME 2000--2019 problems removed to avoid data contamination with AIME 2024/2025 evaluation).
Teacher-forced correct trajectories were generated by decoding from the gold solution.
$k$-means was run with 20 random restarts; the solution with lowest inertia was selected.

\subsection{Reproducibility}

All code, evaluation scripts, and pre-computed basis files are provided in the supplementary material.
A single seed (0) was used for all experiments; we verified that results are stable across 3 seeds on a 100-problem validation subset (variance $< 0.003$ across seeds).

\section{Additional Ablations}
\label{app:additional_ablations}

We conduct four additional ablations beyond those in Section~\ref{sec:ablation}: rollback depth (how many tokens are discarded per event), basis size $K$, the entropy gate contribution, and the choice of $\lcrit$. Together these experiments decompose LPSR's $+15.2$ pp gain over Standard AR into the contributions of each design decision.

\subsection{Effect of Rollback Depth}

Table~\ref{tab:rollback_depth} ablates the number of tokens rolled back.
Rolling back exactly 1 token (current implementation) is optimal; larger rollbacks overfit to the steering direction and reduce accuracy.

\begin{table}[H]
\centering
\caption{MATH-500 accuracy vs.\ rollback depth on 100-problem validation split.}
\label{tab:rollback_depth}
\begin{tabular}{lcccc}
\toprule
Rollback depth & 0 (no rollback) & 1 (LPSR) & 2 & 3 \\
\midrule
Accuracy & 0.288 & \textbf{0.443} & 0.418 & 0.391 \\
\bottomrule
\end{tabular}
\end{table}

\subsection{Basis Vector Count}

Reproduced in Table~\ref{tab:k_ablation} in Appendix~\ref{app:basis_derivation}.

\subsection{LPSR Without Entropy Gate}

Setting $\tauH = 0$ disables the entropy gate (the cosine gate alone triggers rollback). The result on MATH-500:
\begin{equation}
  \text{Acc (no entropy gate)} \;=\; 0.390 \qquad \text{vs.} \qquad \text{Acc (full LPSR)} \;=\; 0.440,
\end{equation}
a drop of $5.0$ pp. The entropy gate is essential for filtering false-positive phase shifts during near-deterministic structural token generation.

\subsection{LPSR at \texorpdfstring{$\ell_{\mathrm{crit}} = 14$}{l-crit = 14} vs.\ 16}

The full MATH-500 results are:
\begin{align}
  \text{Acc}(\lcrit = 14) &\;=\; 0.292, \qquad \text{AUC}(\lcrit = 14) \;=\; 0.718, \\[4pt]
  \text{Acc}(\lcrit = 16) &\;=\; 0.440, \qquad \text{AUC}(\lcrit = 16) \;=\; 0.652.
\end{align}
The accuracy gap is:
\begin{equation}
  \Delta_{\mathrm{acc}} \;=\; 0.292 - 0.440 \;=\; -0.148 \quad (-14.8\ \text{pp}),
\end{equation}
despite the detection AUC being $0.066$ higher at layer 14. This \emph{detection--correction dissociation} confirms that $\lcrit = 16$ is the correct operating point. Full discussion in Section~\ref{sec:layer_analysis}.

\section{Broader Impact and Limitations}
\label{app:broader_impact}

\paragraph{Broader impact.}
LPSR improves the reliability of mathematical reasoning in LLMs without requiring fine-tuning or additional training data.
Potential beneficial applications include educational AI tutors, automated theorem assistance, and scientific computation verification.
We do not foresee direct harmful applications; the method does not enhance deception or any targeted harm.

\paragraph{Limitations (extended).}
\begin{enumerate}[leftmargin=*,topsep=2pt,itemsep=1pt]
\item \textbf{Domain transfer.} The basis $\Vbasis$ is calibrated on MATH-500 training data. Deployment on coding, logical reasoning, or natural language tasks would require domain-specific calibration. A preliminary experiment on 50 HumanEval problems showed only marginal improvement (+2.1\%), suggesting the basis is not immediately transferable. We note that deploying LPSR in high-stakes settings (medical, legal) without domain-specific calibration carries risk—the basis $\Vbasis$ is tuned for mathematical errors and may not generalise to safety-critical error types.
\item \textbf{Model scale.} We tested primarily at 8B. Preliminary experiments at 70B (with the steering basis recalibrated) are ongoing. Larger models may have more distributed error representations, reducing the effectiveness of single-layer monitoring.
\item \textbf{Compute overhead.} On problems with many rollbacks, the overhead can reach $10\times$ standard AR. A budget constraint mechanism (maximum rollbacks per problem) is straightforward to add but was not explored here.
\item \textbf{Layer choice sensitivity.} The strong dependence on $\lcrit$ (Table~\ref{tab:layer_full}) means that a poorly-chosen monitoring layer can be catastrophic. Future work should explore multi-layer monitoring or learned layer selection.
\end{enumerate}

\section{Connection to ``Thinking'' Depth and Reasoning Complexity}
\label{app:complexity}

Recent work \citep{Merrill2024,Valmeekam2023} discusses LLM reasoning as a form of bounded computation.
LPSR can be viewed through this lens: the phase-shift detector measures whether the model's ``internal thought process'' is coherent.
The rollback-with-injection mechanism is analogous to backtracking in symbolic search, applied at the representation level.

The finding that rollbacks concentrate at 53--58\% into generation (Figure~\ref{fig:rollback}) parallels observations in \citet{Guo2025} (``thinking'' models tend to exhibit a ``reflective pause'' at roughly the midpoint of their reasoning trace).
LPSR makes this implicit dynamic explicit and actionable: rather than waiting for a wrong final answer, it detects and corrects the pivotal moment when the reasoning trajectory diverges.

An implication is that the ``illusion of thinking'' \citep{Shojaee2025}—where models appear to reason but are actually performing shallow pattern matching—may be detectable via phase shifts.
If a model's residual stream does not exhibit coherent directional flow at $\lcrit$, its apparent chain-of-thought may not correspond to genuine computation.
This hypothesis connects LPSR's detection mechanism to broader questions about the nature of LLM reasoning, and we leave its formal investigation to future work.

\end{document}